\newif\iflong
\newcommand{\dtlinkcolor}{{0.8 0.8 1}} 
\renewcommand{\phi}{\varphi}
\newcommand{\E}{\mathbb{E}}
\newcommand{\R}{\mathbb{R}}
\def\ds1{\mathds{1}}
\renewcommand{\epsilon}{\varepsilon}
\renewcommand{\tilde}{\widetilde}
\newlength{\minipagewidth}
\newcommand{\beq}{\begin{equation}}
\newcommand{\eeq}{\end{equation}}
\newcommand{\beqa}{\begin{eqnarray}}
\newcommand{\eeqa}{\end{eqnarray}}
\newcommand{\beqan}{\begin{eqnarray*}}
\newcommand{\eeqan}{\end{eqnarray*}}
\def\ba#1\ea{\begin{align*}#1\end{align*}} 
\def\banum#1\eanum{\begin{align}#1\end{align}} 
\newcommand{\defeq}{\stackrel{\mathrm{\scriptscriptstyle def}}{=}}
\newcommand{\Trun}{\mathcal{T}}
\newcommand{\bell}{\bar{\ell}}
\newcommand{\MYGA}{{\hyperref[alg:MYGA]{\mathtt{MYGA}}}}
\newtheorem{theorem}{Theorem}[section]
\newtheorem{lemma}[theorem]{Lemma}
\newtheorem{claim}[theorem]{Claim}
\newtheorem{definition}[theorem]{Definition}
\newtheorem*{rep@theorem}{\rep@title}
\newcommand{\newreptheorem}[2]{%
\newenvironment{rep#1}[1]{%
 \def\rep@title{#2 \ref{##1}}%
 \begin{rep@theorem}}%
 {\end{rep@theorem}}}
\theoremstyle{definition}
\newtheorem{example}[theorem]{Example}
\numberwithin{equation}{section}
\newcommand{\defem}[1]{\emph{\textsf{\small #1}}}
\newcommand{\da}{\text{\ding{172}}\xspace}
\newcommand{\db}{\text{\ding{173}}\xspace}
\newcommand{\dc}{\text{\ding{174}}\xspace}
\newcommand{\dd}{\text{\ding{175}}\xspace}
\newcommand{\rref}[1]{\mbox{\cref{#1}}}
\newcommand{\namedref}[2]{\mbox{\hyperref[#2]{#1~\ref*{#2}}}}
\newcommand{\figurerefb}[2]{\mbox{\hyperref[#1]{Figure~\ref*{#1}#2}}}
\newcommand{\lemmaref}[1]{\namedref{Lemma}{#1}}
\newcommand{\claimref}[1]{\namedref{Claim}{#1}}
\newcommand{\algorithmref}[1]{\namedref{Algorithm}{#1}}
\newcommand{\equationref}[1]{\mbox{\hyperref[#1]{(\ref*{#1})}}}
\renewcommand{\eqref}{\equationref}
\newcommand{\lineref}[1]{\namedref{Line}{#1}}
\newcommand{\problemref}[1]{\mbox{\hyperref[#1]{Problem (\ref*{#1})}}}
\theoremstyle{plain} 
\newcommand{\parhead}[1]{\smallskip \noindent {\bfseries\boldmath\ignorespaces #1.}\hskip 0.9em plus 0.3em minus 0.3em}
\newcommand{\parhead}[1]{\smallskip\noindent {\bfseries\boldmath\ignorespaces #1.}\hskip 0.9em plus 0.3em minus 0.3em}
\titlespacing\section{0pt}{4pt plus 0pt minus 1pt}{2pt plus 0pt minus 1pt}
\titlespacing\subsection{0pt}{4pt plus 0pt minus 1pt}{1pt plus 1pt minus 1pt}
\titlespacing\subsubsection{0pt}{4pt plus 0pt minus 1pt}{1pt plus 1pt minus 1pt}
\newtheoremstyle{slplain}
  {.4\baselineskip\@plus.1\baselineskip\@minus.1\baselineskip}
  {.3\baselineskip\@plus.1\baselineskip\@minus.1\baselineskip}
  {\itshape}
  {}
  {\bfseries}
  {.\xspace}
  { }
  {}
\theoremstyle{slplain} 
\begin{document}

\title{Make the Minority Great Again: \\
First-Order Regret Bound for Contextual Bandits}

\newcommand{\authorname}[1]{\makebox[4.1cm][c]{#1}}
\author{
\authorname{Zeyuan Allen-Zhu} \\
\footnotesize{\textsf{\href{mailto:zeyuan@csail.mit.edu}{\color{black}zeyuan@csail.mit.edu}}} \\
Microsoft Research
\and
\authorname{S\'ebastien Bubeck} \\
\footnotesize{\textsf{\href{mailto:sebubeck@microsoft.com}{\color{black}sebubeck@microsoft.com}}} \\
Microsoft Research
\and
\authorname{Yuanzhi Li\thanks{This work was done while Y. Li was an intern at Microsoft Research.}} \\
\footnotesize{\textsf{\href{mailto:yuanzhil@cs.princeton.edu}{\color{black}yuanzhil@cs.princeton.edu}}} \\
Princeton University
}

\date{\today}

\maketitle

\begin{abstract}
Regret bounds in online learning compare the player's performance to $L^*$, the optimal performance in hindsight with a fixed strategy. Typically such bounds scale with the square root of the time horizon $T$. The more refined concept of \emph{first-order} regret bound replaces this with a scaling $\sqrt{L^*}$, which may be much smaller than $\sqrt{T}$.
It is well known that minor variants of standard algorithms satisfy first-order regret bounds in the full information and multi-armed bandit settings. In a COLT 2017 open problem~\cite{AKLLS2017}, Agarwal, Krishnamurthy, Langford, Luo, and Schapire raised the issue that existing techniques do not seem sufficient to obtain first-order regret bounds for the contextual bandit problem. In the present paper, we resolve this open problem by presenting a new strategy based on augmenting the policy space.
\end{abstract}

\section{Introduction}
The contextual bandit problem is an influential extension of the classical multi-armed bandit. It can be described as follows. Let $K$ be the number of actions, $E$ a set of experts (or ``policies"), $T$ the time horizon, and denote $\Delta_K=\{x \in [0,1]^K : \sum_{i=1}^K x(i) =1\}$. At each time step $t=1, \hdots, T$,
\begin{itemize}
\item The player receives from each expert $e \in E$ an ``advice" $\xi_t^e \in \Delta_K$.
\item Using advices and previous feedbacks, the player selects a probability distribution $p_t \in \Delta_K$.
\item The adversary selects a loss function $\ell_t : [K] \rightarrow [0,1]$.
\item The player plays an action $a_t \in [K]$ at random from $p_t$ (and independently of the past).
\item The player's suffered loss is $\ell_t(a_t) \in [0,1]$, which is also the only feedback the player receives about the loss function $\ell_t$.
\end{itemize}
The player's performance at the end of the $T$ rounds is measured through the regret with respect to the best expert:
\begin{equation} \label{eq:regret}
R_T \defeq
\max_{e \in E} \Big\{ \E \Big[ \sum_{t=1}^T \ell_t(a_t) - \langle \xi_t^e, \ell_t \rangle \Big] \Big\}
=
\max_{e \in E} \Big\{ \E \Big[ \sum_{t=1}^T \langle p_t - \xi_t^e, \ell_t \rangle \Big] \Big\}
\enspace.
\end{equation}

A landmark result by \cite{ACFS03} is that a regret of order $O(\sqrt{T K \log(|E|)})$ is achievable in this setting. The general intuition captured by regret bounds is that the player's performance is equal to the best expert's performance up to a term of lower order. However the aforementioned bound might fail to capture this intuition if $T \gg L^*_T \defeq \min_{e \in E} \E \sum_{t=1}^T \langle \xi_t^e , \ell_t \rangle$. It is thus natural to ask whether one could obtain a stronger guarantee where $T$ is essentially replaced by $L^*_T$. This question was posed as a COLT 2017 open problem~\cite{AKLLS2017}. Such bounds are called first-order regret bounds, and they are known to be possible with full information \cite{ACFS03}, as well as in the multi-armed bandit setting \cite{AAGO06} (see also \cite{FLLST2016} for a different proof) and the semi-bandit framework  \cite{Neu2015, LST2017}. Our main contribution is a new algorithm for contextual bandit, which we call $\MYGA$ (see \rref{sec:alg}), and for which we prove the following first-order regret bound, thus resolving the open problem.
\begin{theorem} \label{th:main}
For any loss sequence such that $\min_{e \in E} \E \sum_{t=1}^T \langle \xi_t^e , \ell_t \rangle \leq L^*$ one has that $\MYGA$ with $\gamma = \Theta(\eta)$ and $\eta = \Theta \Big(\min \big \{\frac{1}{K}, \sqrt{\frac{\log(|E| T)}{K L^*}} \big\} \Big)$ satisfies
\[
R_T \leq O\left(\sqrt{K \log(|E| T) L^*} + K \log(|E| T) \right) \,.
\]
\end{theorem}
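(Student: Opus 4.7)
The plan is to view $\MYGA$ as an EXP4-type exponential-weights algorithm over an \emph{augmented} expert set $E'$, run with a biased ``implicit exploration'' loss estimator $\hat{\ell}_t(a) \defeq \ell_t(a)\,\ds1\{a_t = a\}/(p_t(a) + \gamma)$. The crucial design choice is to adjoin to $E$ a family of synthetic experts (the ``minority boosters'') whose role is to force the induced action distribution $p_t$ to place mass $\Omega(\alpha/K)$ on every action, for a constant $\alpha$. A concrete realization is $E' = E \cup \{e_1,\dots,e_K\}$, where $e_i$ is the deterministic expert always advising action $i$, together with a clipping step that reserves total mass $\alpha$ on $\{e_1,\dots,e_K\}$ spread uniformly. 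Since $|E'| \le |E| + K$, this costs only an additive $O(\log K)$ in the log-cardinality term.

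On top of this, I would run the standard exponential-weights update on $\hat L_t^e \defeq \langle \xi_t^e, \hat\ell_t\rangle$, which yields
\[
\sum_{t=1}^T \langle q_t, \hat L_t\rangle - \min_{e \in E'} \sum_{t=1}^T \hat L_t^e \;\le\; \frac{\log|E'|}{\eta} + \eta \sum_{t=1}^T \sum_e q_t(e) (\hat L_t^e)^2.
\]
Translating the estimated losses back to true losses introduces the usual IX bias corrections on both sides, controlled by $\gamma$. After taking expectations, the proof reduces to controlling the variance and the bias terms.

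The main obstacle---and the whole point of the augmentation---is to make both quantities scale with $L^*$ rather than $T$. For the variance, $\hat L_t^e = \xi_t^e(a_t)\hat\ell_t(a_t)$ and $\sum_e q_t(e) \xi_t^e(a_t)^2 \le p_t(a_t)$ collapse the per-round variance to $\ell_t(a_t)^2 p_t(a_t)/(p_t(a_t)+\gamma)^2 \le \ell_t(a_t)/(p_t(a_t)+\gamma)$. The floor $p_t(a)+\gamma \gtrsim \alpha/K$ enforced by the augmentation bounds this by $O(K\ell_t(a_t))$, so summing over $t$ gives total variance $O(K L_{\mathrm{alg}}) = O(K(L^* + R_T))$. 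A parallel computation---using $\gamma/(p_t(a)+\gamma) \le O(\gamma K)$---bounds the IX bias against the best \emph{original} expert $e^*$ by $O(\gamma K L^*)$, via $\sum_t \langle \xi_t^{e^*}, \ell_t\rangle = L^*$. Without the floor, the per-round variance could only be bounded by $O(K)$ (giving the vanilla $\sqrt{KT \log|E|}$ rate); the floor is precisely what upgrades this to $O(K \ell_t(a_t))$ per round, hence to $O(K L^*)$ in aggregate.

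Combining everything with $\gamma = \Theta(\eta)$ produces a master inequality of the shape $R_T \le \log|E'|/\eta + c\eta K(L^* + R_T) + O(\eta K L^*)$, which rearranges, once $c\eta K \le 1/2$, to $R_T \le O\bigl(\log|E'|/\eta + \eta K L^*\bigr)$. Tuning $\eta \asymp \sqrt{\log|E|/(KL^*)}$ delivers the $\sqrt{KL^*\log|E|}$ term; the cap $\eta = \Theta(1/K)$, active when $L^* \lesssim K\log|E|$, contributes the additive $K\log|E|$ term. The extra $\log T$ inside $\log(|E|T)$ arises from a standard doubling trick that adapts to the unknown $L^*$: it splits the horizon into $O(\log T)$ geometric epochs, each paying the guaranteed regret and hence multiplying the main term by only a logarithmic factor.
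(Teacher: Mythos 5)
Your plan hits a genuine wall that is, in fact, exactly the obstacle the open problem is about. The floor $p_t(a) \gtrsim \alpha/K$ that you enforce via the uniformly-clipped single-action experts $e_1,\dots,e_K$ does not come for free: it forces the algorithm to play every action (including all the bad ones) with probability $\Omega(\alpha/K)$, which contributes an exploration term $\Theta(\alpha T)$ to the regret decomposition --- the uniform policy can have loss $\Theta(T)$ even when $L^*=0$. Your ``master inequality'' $R_T \le \log|E'|/\eta + c\eta K (L^*+R_T) + O(\eta K L^*)$ omits this term. If you carry it through and optimize $\alpha$ against the variance term $O\bigl((K/\alpha)L^*\bigr)$ and the $\alpha T$ exploration cost, the best you get is a bound scaling like $(K L^* \log|E'|)^{1/3} T^{1/3}$, not a first-order bound. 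Any uniform-mixing or probability-flooring scheme has this pathology; this is why the naive approaches fail here.

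The paper's construction avoids the forced-exploration cost by doing something qualitatively different: the truncation $\Trun^k_s$ \emph{zeroes out} small minority-arm probabilities and redistributes the freed mass to the majority arms (whose probability is already $\ge 1/(2K)$), rather than installing a probability floor. Because $p_t$ is then no longer the plain exponential-weights average $q_t$, the trick is to adjoin as auxiliary experts the truncated strategies $\xi^s_t = \Trun^{k_t}_s q_t$ themselves, for a grid $S$ of thresholds $s$ --- not deterministic single-arm experts. The load-bearing step is an integral-averaging argument (Lemmas~\ref{lem:key} and \ref{lem:keyK}): if the total minority loss $m$ exceeds $2K L_T$, then some threshold $s\in S$ satisfies $L_T - L_T^s \ge \gamma(m-2KL_T)-1$; since $\Trun^{k_t}_s q_t$ is one of the experts, the Exp4 regret bound against it caps $m$ by $O(KL_T + \log|E'|/(\eta\gamma))$. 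This gives the variance bound in~\eqref{eq:toshow} without ever paying a per-round cost proportional to $T$. Your proposal is missing this mechanism entirely, and I don't see how to patch the uniform-mixing route to recover it.

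Two smaller points: (i) the $\log T$ inside $\log(|E|T)$ is not from a doubling trick --- the paper assumes $L^*$ is known --- but from the cardinality $|S|=O(T)$ of the auxiliary expert set, so $\log|E'|=O(\log(|E|T))$; and (ii) the paper uses the plain importance-weighted estimator $\tilde\ell_t(i)=\ell_t(i)\ds1\{i=a_t\}/p_t(i)$ with no IX smoothing, relying instead on the fact that truncation makes the truncated loss $\bell_t(i)=\ell_t(i)\ds1\{p_t(i)>0\}$ vanish on the zeroed-out arms.
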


\section{Algorithm Description} \label{sec:alg}

\newcommand{\Imaj}{I^{\mathsf{maj}}}
\newcommand{\Imino}{I^{\mathsf{mino}}}
In this section we describe the $\MYGA$ algorithm.

\subsection{Truncation}
We introduce a truncation operator $\Trun^k_s$ that takes as input an index $k\in [K]$ and a threshold $s\in [0,\frac{1}{2}]$. Then, treating the first $k$ arms as ``majority arms'' and the last $K-k$ arms as ``minority arms,'' $\Trun^k_{s}$ redistributes ``multiplicatively" the probability mass of all minority arms below threshold $s$ to the majority arms.
\begin{definition}\label{def:truncKdef}
For $k\in [K]$ and $s\in(0,\frac{1}{2}]$, the \defem{truncation operator} $\Trun_{s}^k \colon \Delta_K \to \Delta_K$ is defined as follows. Given any $q\in \Delta_K$, then we set
\begin{equation*}
\Trun^k_s q(i) = \left\{
                 \begin{array}{ll}
                   0, & \hbox{$i > k$ and $q(i)\leq s$;} \\
                   q(i), & \hbox{$i > k$ and $q(i) > s$;} \\
                   q(i) \cdot \big( 1 + \frac{\sum_{j:j>k \,\wedge\, q(j)\leq s} q(j)}{\sum_{j\leq k} q(j)} \big)
                    , & \hbox{$i \leq k$.}
                 \end{array}
               \right.
\end{equation*}
\end{definition}
Equivalently one can define $\Trun^k_s q(i)$ for the majority arms $i\leq k$ with the following implicit formula:
\begin{equation} \label{eq:implicittruncation}
\Trun^k_s q(i) = \frac{q(i)}{\sum_{j \leq k} q(j)} \sum_{j \leq k} \Trun^k_s q(j) \,.
\end{equation}
To see this it suffices to note that the amount of mass in the majority arms is given by 
\[
\sum_{j \leq k} \Trun^k_s q(j) = 1 - \sum_{j > k} \Trun^k_s q(j) = 1 - \sum_{j:j>k \,\wedge\, q(j) > s} q(j) = \sum_{j\leq k} q(j) + \sum_{j:j>k \,\wedge\, q(j)\leq s} q(j) \,.
\]

\begin{example}
If $K=2$, then $\Trun_s^1 q$ simply adds $q(2)$ into $q(1)$ if $q(2) \leq s$.
\end{example}
\begin{example}
An example with $K=11$ and $k=3$ is as follows:
\begin{align*}
q &= \big(
&0.2 && 0.1 && 0.2 && 0.1 && 0.1 && 0.1 && 0.05 && 0.05 && 0.04 && 0.03 && 0.03 \big)
\\
\Trun^3_{0.02} q &=\big(
&0.2 && 0.1 && 0.2 && 0.1 && 0.1 && 0.1 && 0.05 && 0.05 && 0.04 && 0.03 && 0.03 \big)
\\
\Trun^3_{0.03} q &=\big(
&0.224 && 0.112 && 0.224 && 0.1 && 0.1 && 0.1 && 0.05 && 0.05 && 0.04 && 0 && 0 \big)
\\
\Trun^3_{0.04} q &=\big(
&0.24 && 0.12 && 0.24 && 0.1 && 0.1 && 0.1 && 0.05 && 0.05 && 0 && 0 && 0 \big)
\\
\Trun^3_{0.05} q &=\big(
&0.28 && 0.14 && 0.28 && 0.1 && 0.1 && 0.1 && 0 && 0 && 0 && 0 && 0 \big)
\\
\Trun^3_{0.1} q &=\big(
&0.4 && 0.2 && 0.4 && 0 && 0 && 0 && 0 && 0 && 0 && 0 && 0 \big)
\\
\Trun^3_{0.2} q &=\big(
&0.4 && 0.2 && 0.4 && 0 && 0 && 0 && 0 && 0 && 0 && 0 && 0 \big)
\\
\Trun^3_{0.5} q &=\big(
&0.4 && 0.2 && 0.4 && 0 && 0 && 0 && 0 && 0 && 0 && 0 && 0 \big)
\end{align*}
 \end{example}

\subsection{Informal description}
$\MYGA$ is parameterized by two parameters: a classical learning rate $\eta>0$, and a thresholding parameter $\gamma \in \frac{1}{2T} \mathbb{N} = \{ \frac{1}{2T}, \frac{2}{2T}, \frac{3}{2T} ,\dots \} $. Also let $S = (\gamma, 1/2] \cap \frac{1}{2T} \mathbb{N} = (\gamma, 1/2] \cap \{ \frac{1}{2T}, \frac{2}{2T}, \frac{3}{2T} ,\dots \}$

At a high level, a key feature of $\MYGA$ is to introduce a set of \emph{auxiliary} experts, one for each $s \in S$. More precisely, in each round $t$, after receiving expert advices $\{\xi^e_t\}_{e\in E}$, $\MYGA$ calculates a distribution $\xi^s_t \in \Delta_K$ for each $s\in S$. Then, $\MYGA$ uses the standard exponential weight updates on $E' = E \cup S$ with learning rate $\eta>0$, to calculate a weight function $w_t \in \mathbb{R}_+^{|E|+|S|}$ ---see \eqref{eq:expweights}. Then, it computes
\begin{itemize}
\item $\zeta_t \in \Delta_K$, the weighted average of expert advices in $E$: $\zeta_t = \frac{1}{\sum_{e\in E} w_t(e)} \sum_{e \in E} w_t(e) \cdot \xi_t^e$.
\item $q_t \in \Delta_K$, the weighted average of expert advices in $E'$: $q_t = \frac{1}{\|w_t\|_1} \sum_{e \in E'} w_t(e) \cdot \xi_t^e$.
\end{itemize}
Using these information, $\MYGA$ calculates the probability distribution $p_t \in \Delta_K$ from which the arm is played at round $t$.

Let us now explain how $p_t$ and $\xi^s_t$, $s \in S$ are defined. First we remark that in the contextual bandit setting, the arm index has no real meaning since in each round $t$ we can permute the arms by some $\pi_t \colon [K] \to [K]$ and permute the expert's advices and the loss vector by the same $\pi_t$. For this reason, throughout this paper, we shall assume
\[
\zeta_t(1) \geq \zeta_t(2 ) \geq \cdots \zeta_t(K) \,.
\]
Let us define the ``pivot" index $k_t = \min\{ i \in [K] : \sum_{j \leq i} \zeta_t(j) \geq 1/2\}$. Then, in order to perform truncation, $\MYGA$ views the first $k_t$ arms as ``majority arms'' and the last $K-k_t$ arms as ``minority arms'' of the current round $t$. At a high level we will have:
\begin{itemize}
\item the distribution to play from is $p_t = \Trun_\gamma^{k_t} q_t$.
\item Each auxiliary expert $s\in S$ is defined by $\xi_t^s = \Trun_s^{k_t} q_t$.
\end{itemize}
We now give a more precise description in \algorithmref{alg:MYGA}.

\begin{algorithm*}[t!]
\caption{$\MYGA$ (Make the minoritY Great Again) \label{alg:MYGA}}
\begin{algorithmic}[1]
\Require learning rate $\eta >0$, threshold parameter $\gamma \in \frac{1}{2T} \mathbb{N}$
\vspace{2pt}
\State $S \gets (\gamma, 1/2] \cap \frac{1}{2T} \mathbb{N}$ and $w_1 \gets (1,\ldots,1) \in \R^{|E| + |S|}$

\For {$t=1$ \textbf{to} $T$}
\State 
receive advices $\xi_t^e\in \Delta_K$ from each expert $e\in E$

\State weighted average $\zeta_t \gets \frac{\sum_{e\in E}w_t(e) \xi^e_t}{\sum_{e\in E} w_t(e)} \in \Delta_K$

\State
assume $\zeta_t(1) \geq \zeta_t(2 ) \geq \cdots \zeta_t(K)$ wlog. by permuting the arms

\State
$k_t \gets \min\{ i \in [K] : \sum_{j \leq i} \zeta_t(j) \geq 1/2\}$
\Comment{the first $k_t$ arms are majority arms}

\State
find $q_{t} \in \Delta_K$ such that
\Comment{$q_t$ can be found in time $O(K |S|) = O(K T)$, see \rref{lem:itexists}}
\begin{equation} \label{eq:implicitdef}
\textstyle q_{t} = \frac{1}{\sum_{e \in E} w_t(e) + \sum_{s \in S} w_t(s)} \Big(\sum_{e \in E} w_t(e) \xi_t^e + \sum_{s \in S} w_t(s) \Trun^{k_t}_s q_t \Big) ~.
\end{equation}

\State $\xi_t^s \gets \Trun^{k_t}_{s} q_t$ for every $s \in S$ \quad and \quad $p_t \gets \Trun^{k_t}_{\gamma} q_t$

\State draw an arm $a_t \in [K]$ from probability distribution $p_t$ and receive feedback $\ell_t(a_t)$

\vspace{2pt}
\State compute loss estimator $\tilde{\ell}_t \in \mathbb{R}_{+}^K$ as  $\tilde{\ell}_t(i) = \frac{\ell_{t}(i)}{p_{t}(i)} \ds1_{i = a_t}$
\vspace{2pt}

\State update the exponential weights for any $e \in E \cup S$:
\begin{equation} \label{eq:expweights}
\textstyle w_{t+1}(e) = \exp \Big(  - \eta \sum_{r=1}^t \langle \xi_r^e, \tilde{\ell}_r \rangle \Big) ~.
\end{equation}

\EndFor
\end{algorithmic}
\end{algorithm*}

\section{Preliminaries}
\begin{definition}\label{def:truncLoss}
For analysis purpose, let us define the \defem{truncated loss} $\bell_t (i) \defeq \ell_t(i) \ds1\{p_t(i) > 0\}$, so that
$$\E_{a_t} \big[ \langle \tilde{\ell}_t, p_t \rangle \big] = \langle \bell_t, p_t \rangle = \langle \ell_t, p_t \rangle $$
\end{definition}

We next derive two lemmas that will prove useful to isolate the properties of the truncation operator $\Trun_s^k$ that are needed to obtain a first-order regret bound.

\begin{lemma} \label{lem:1}
Let $\gamma \in [0,1]$ and assume that for all $i \in [K]$, $(1- c K \gamma) p_t(i) \leq q_t(i)$ for some universal constant $c>0$, and that $p_t(i) \neq 0 \Rightarrow p_t(i) \geq q_t(i)$. Then one has
\begin{equation} \label{eq:stdregret}
(1- c K \gamma) L_T - L_T^* \leq \frac{\log(|E'|)}{\eta} + \frac{\eta}{2} \E \sum_{t=1}^T \|\bell_t\|_2^2 ~.
\end{equation}
\end{lemma}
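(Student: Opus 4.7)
The plan is to start from the standard pathwise regret inequality for exponential weights run on the enlarged expert set $E' = E \cup S$ against the nonnegative estimators $\tilde{\ell}_t$. Letting $\tilde{q}_t(e) = w_t(e)/\|w_t\|_1$, the elementary inequality $e^{-x} \le 1 - x + x^2/2$ for $x \ge 0$ gives, for every $e^* \in E'$,
\[
\sum_{t=1}^T \sum_{e \in E'} \tilde{q}_t(e) \langle \xi_t^e, \tilde{\ell}_t \rangle - \sum_{t=1}^T \langle \xi_t^{e^*}, \tilde{\ell}_t \rangle \le \frac{\log|E'|}{\eta} + \frac{\eta}{2} \sum_{t=1}^T \sum_{e \in E'} \tilde{q}_t(e) \langle \xi_t^e, \tilde{\ell}_t \rangle^2.
\]
By the fixed-point defining line of \algorithmref{alg:MYGA}, the mixture $\sum_{e \in E'} \tilde{q}_t(e)\, \xi_t^e$ is exactly $q_t$, which collapses the first-order term to $\sum_t \langle q_t, \tilde{\ell}_t \rangle$. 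For the quadratic term I would use that $\tilde{\ell}_t$ is supported on the single coordinate $a_t$, so $\langle \xi_t^e, \tilde{\ell}_t \rangle^2 = \xi_t^e(a_t)^2 \tilde{\ell}_t(a_t)^2$, and since $\xi_t^e(a_t)\in[0,1]$ the inner sum over $e$ is bounded by $q_t(a_t)\, \tilde{\ell}_t(a_t)^2$.

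Next I would take expectations and convert to the true losses in three short steps. First, the identity $\E_{a_t}[\tilde{\ell}_t(i)] = \bell_t(i)$ from \definitionref{def:truncLoss} reduces the first-order term to $\E \sum_t \langle q_t, \bell_t \rangle$, and the hypothesis $(1 - cK\gamma) p_t(i) \le q_t(i)$ combined with $\bell_t(i) = \ell_t(i)\,\ds1\{p_t(i)>0\}$ gives the coordinatewise inequality $(1-cK\gamma) p_t(i)\ell_t(i) \le q_t(i)\bell_t(i)$ (trivially on $\{p_t(i)=0\}$), yielding $(1-cK\gamma) L_T \le \E \sum_t \langle q_t, \bell_t \rangle$. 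Second, specializing $e^* \in E \subseteq E'$ (giving up the minimum over the auxiliary set $S$) and applying $\E \min \le \min \E$, together with $\langle \xi_t^{e^*}, \bell_t \rangle \le \langle \xi_t^{e^*}, \ell_t \rangle$, lower-bounds the benchmark by $-L_T^*$. Third, a direct computation gives
\[
\E_{a_t}\big[q_t(a_t)\, \tilde{\ell}_t(a_t)^2\big] = \sum_{i : p_t(i) > 0} \frac{q_t(i)}{p_t(i)}\, \ell_t(i)^2,
\]
and the second hypothesis $p_t(i) \ge q_t(i)$ on the support of $p_t$ bounds this by $\sum_i \bell_t(i)^2 = \|\bell_t\|_2^2$.

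I do not expect any real obstacle: the argument is disciplined bookkeeping showing how the two structural hypotheses on $(p_t, q_t)$ interact with the standard Hedge inequality. The only subtle point is the treatment of coordinates with $p_t(i) = 0$, and this is precisely what the truncated loss $\bell_t$ is designed to handle --- on such coordinates the importance estimator is forced to zero, so all three coordinatewise comparisons above hold trivially, and no boundedness or small-step condition on $\eta \tilde{\ell}_t$ is needed thanks to the nonnegativity of the losses.
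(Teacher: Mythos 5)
Your proof is correct and takes essentially the same approach as the paper's. The paper compresses the core of the argument into a citation of the standard Exp4 regret bound (Theorem 4.2 of Bubeck--Cesa-Bianchi) plus the observation that $p_t(i) \ge q_t(i)$ on the support of $p_t$ implies $\tilde{\ell}_t(i) \le \frac{\ell_t(i)}{q_t(i)}\ds1\{i=a_t\}$; your proposal simply unpacks that cited argument --- the pathwise Hedge inequality via $e^{-x}\le 1-x+x^2/2$, collapsing the mixture to $q_t$, bounding the quadratic term by $q_t(a_t)\tilde{\ell}_t(a_t)^2$, and using the two structural hypotheses in expectation exactly as the paper intends.
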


\begin{proof}
Using $\langle p_t , \ell_t \rangle = \langle p_t , \bell_t \rangle$, $\langle - \xi_t^e, \ell_t \rangle \leq \langle - \xi_t^e, \bell_t \rangle$, and $(1- c K \gamma) p_t(i) \leq q_t(i)$, we have
\[
(1- c K \gamma) L_T - L_T^* \leq \max_{e \in E'} \E \sum_{t=1}^T \langle (1- c K \gamma) p_t - \xi_t^e, \bell_t \rangle \leq \max_{e \in E'} \E \sum_{t=1}^T \langle q_t - \xi_t^e, \bell_t \rangle \,.
\]
The rest of the proof follows from standard argument to bound the regret of Exp4, see e.g., [Theorem 4.2, \cite{BC12}] (with the minor modification that the assumption on $p_t$ implies that $\tilde{\ell}_t(i) \leq \frac{\ell_t(i)}{q_t(i)} \ds1\{i = a_t\}$).
\end{proof}

The next lemma is straightforward.

\begin{lemma} \label{lem:2}
In addition to the assumptions in \rref{lem:1}, assume that there exists some numerical constants $c', c'' \geq 0$ such that
\begin{equation} \label{eq:toshow}
\gamma \ \E \sum_{t=1}^T \|\bell_t\|_2^2 \leq 2 \ c' \ (\eta+\gamma) \ K \ L_T +2 \ c'' \ \frac{\log(|E'|)}{\eta} ~.
\end{equation}
Then one has
\[
\left(1- c K \gamma - \left(\eta + \frac{\eta^2}{\gamma}\right) c' K)\right) (L_T - L_T^*) \leq \left( \frac{1}{\eta} + \frac{c''}{\gamma} \right) \log(|E'|) + \left(c K \gamma + \left(\eta + \frac{\eta^2}{\gamma}\right) c' K \right) L_T^* ~.
\]
\end{lemma}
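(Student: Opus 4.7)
The plan is purely algebraic: combine the bound from \rref{lem:1} with the hypothesis \eqref{eq:toshow} by substitution, collect terms proportional to $L_T$ on the left-hand side, and then convert the bound on $L_T$ alone into a bound on the regret $L_T - L_T^*$ by subtracting a multiple of $L_T^*$ from both sides.

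In more detail, first I would rewrite \eqref{eq:toshow} as
\[
\E \sum_{t=1}^T \|\bell_t\|_2^2 \leq \frac{2 c' (\eta+\gamma) K}{\gamma} L_T + \frac{2 c''}{\eta \gamma} \log(|E'|) ~,
\]
and plug this into the $\frac{\eta}{2}\E\sum_t\|\bell_t\|_2^2$ term on the right-hand side of \eqref{eq:stdregret}. After the substitution, the factor $\frac{\eta}{2} \cdot \frac{2c'(\eta+\gamma)K}{\gamma}$ simplifies to $c'K(\eta + \eta^2/\gamma)$, and the factor $\frac{\eta}{2}\cdot \frac{2c''}{\eta\gamma}$ simplifies to $c''/\gamma$. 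This gives
\[
(1 - cK\gamma)L_T - L_T^* \leq \Big(\frac{1}{\eta} + \frac{c''}{\gamma}\Big)\log(|E'|) + c'K\Big(\eta + \frac{\eta^2}{\gamma}\Big) L_T ~.
\]

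Next I would move the $L_T$ contribution on the right to the left, yielding
\[
\Big(1 - cK\gamma - c'K\big(\eta + \tfrac{\eta^2}{\gamma}\big)\Big) L_T \leq L_T^* + \Big(\frac{1}{\eta} + \frac{c''}{\gamma}\Big) \log(|E'|) ~.
\]
To turn the left-hand side into something proportional to $L_T - L_T^*$, I subtract $\big(1 - cK\gamma - c'K(\eta + \eta^2/\gamma)\big) L_T^*$ from both sides; the $L_T^*$ on the right cancels, leaving the extra term $\big(cK\gamma + c'K(\eta + \eta^2/\gamma)\big) L_T^*$, which is exactly the claimed right-hand side.

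There is no real obstacle here; the lemma is essentially a bookkeeping exercise that isolates the combined contributions of $\eta$ and $\gamma$ into a multiplicative shrinkage of the regret. The only thing to be careful about is that the coefficient $1 - cK\gamma - c'K(\eta + \eta^2/\gamma)$ could a priori be nonpositive, which would make the stated inequality vacuous; however, as the lemma only asserts the inequality (not its usefulness), positivity of this coefficient is a constraint to be enforced later when choosing $\eta$ and $\gamma$ as in \theoremref{th:main}, not a part of this proof.
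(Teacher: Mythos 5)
Your proof is correct, and it is exactly the routine substitution the paper has in mind — the paper simply declares this lemma ``straightforward'' and omits the argument. The algebra checks out (in particular $\frac{\eta}{2}\cdot\frac{2c'(\eta+\gamma)K}{\gamma} = c'K(\eta+\eta^2/\gamma)$ and the final subtraction of $(1-cK\gamma-c'K(\eta+\eta^2/\gamma))L_T^*$ from both sides), and your remark that the sign of the leading coefficient is irrelevant to the validity of the inequality is also right.
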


We now see that it suffices to show that $\MYGA$ satisfies the assumptions of \rref{lem:1} and \rref{lem:2} for $\gamma \simeq \eta$, and $\eta \simeq \min\left\{ \frac{1}{K}, \sqrt{\frac{\log(|E'|)}{K L_T^*}}\right \}$ (assume that $L_T^*$ is known), in which case one obtains a bound of order $\sqrt{K \log(|E'|) L_T^*} + K \log(|E'|)$.
\newline

In fact the assumption of \rref{lem:1} will be easily verified, and the real difficulty will be to prove \eqref{eq:toshow}.
We observe that the standard trick of thresholding the arms with probability below $\gamma$ would yield \eqref{eq:toshow} with the right hand side replaced by $L_T$, and in turn this leads to a regret of order $(L_T^*)^{2/3}$. Our goal is to improve over this naive argument.

\section{Proof of the $2$-Armed Case}
The goal of this section is to explain how our $\MYGA$ algorithm arises naturally. To focus on the main ideas we restrict to the case $K=2$. The complete formal proof of \rref{th:main} is given in \rref{sec:proof}.

Recall we have assumed without loss of generality that $\zeta_t(1) \geq \zeta_t(2)$ for each round $t\in [T]$. This implies $k_t = 1$ because $\zeta_t(1) \geq \frac{1}{2}$.
In this simple case, for $s \in [0,1/2]$, we abbreviate our truncation operator $\Trun_s^{k_t}$ as $\Trun_s$, and it acts as follows. Given $q \in \Delta_2$
$$ \text{if $q(2) \leq s$ we have $\Trun_s q = (1,0)$; \quad and if $q(2)>s$ we have $\Trun_s q = q$.} $$
In particular, we have $q_t(1) \geq q_t(2)$ and $p_t(1) \geq p_t(2)$ for all $t \in [T]$. We refer to arm $1$ as the majority arm and arm $2$ as the minority arm.
We denote $M = \E \sum_{t=1}^T \bell_t(1)$ as the loss of the majority arm and $m = \E \sum_{t=1}^T \bell_t(2)$ as the loss of the minority arm.

Since $\ell_t \in [0,1]^K$ and $K=2$, we have
\begin{equation} \label{eq:omwtoshow}
\textstyle \E \sum_{t=1}^T \|\bell_t\|_2^2 \leq \E \sum_{t=1}^T \bell_t(1) + \bell_t(2) = M + m ~.\end{equation}
Observe also that one always has $L_T \geq \frac{1}{2} M$ (indeed $p_t(1) \geq q_t(1) \geq 1/2$), and thus the whole game to prove \eqref{eq:toshow} is to upper bound the minority's loss $m$.

\subsection{When the minority suffers small loss}
Assume that $m \leq (c'-1) M$ for some constant $c'>0$. Then, because $M \leq 2 L_T$, one can directly obtain \eqref{eq:toshow} from \eqref{eq:omwtoshow} with $c''=0$. In words, when the minority arm has a total loss comparable to the majority arm, simply playing from $\zeta_t$ would satisfy a first-order regret bound.

Our main idea is to somehow enforce this relation $m\lesssim M$ between the minority and majority losses, by ``truncating'' probabilities appropriately. Indeed, recall that if after some truncation we have $p_t(2) = 0$, then it satisfies $\bell_t(2) = 0$ so the minority loss $m$ can be improved.

\subsection{Make the minority great again}
Our key new insight is captured by the following lemma which is proved using an  integral averaging argument.

\begin{definition}
For each $s\geq \gamma$, let $L_t^s \defeq \E \sum_{t=1}^T \langle \Trun_s q_t, \ell_t \rangle$ be the expected loss if the truncated strategy $\Trun_s q_t \in \Delta_K$ is played at each round. \end{definition}
\begin{lemma} \label{lem:key}
As long as $m - M > 0$,
\[
\exists s \in (\gamma,1/2] \colon \quad m-M \leq \frac{L_T - L_T^s}{\gamma} ~.
\]
\end{lemma}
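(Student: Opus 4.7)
The plan is to introduce an auxiliary scalar function of $s$ that is simpler than $L_T - L_T^s$, and to exhibit the required $s$ by maximizing that auxiliary function instead of maximizing $L_T - L_T^s$ directly. Writing $\delta_t := \ell_t(2) - \ell_t(1)$, I would first unfold the $2$-armed truncation: since $\Trun_\gamma q_t$ and $\Trun_s q_t$ coincide unless $\gamma < q_t(2) \leq s$ (in which case their difference equals $(-q_t(2), q_t(2))$), one obtains
\[
L_T - L_T^s \;=\; \E \sum_{t=1}^T q_t(2)\, \delta_t\, \ds1\{\gamma < q_t(2) \leq s\}.
\]
This motivates the ``unweighted'' quantity $G(s) := \E \sum_t \delta_t\, \ds1\{\gamma < q_t(2) \leq s\}$, which is the same sum stripped of the $q_t(2)$ factor.

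Next I would establish two identities. First, a domination bound $m - M \leq G(1/2)$: in the $2$-armed case $\bell_t(1) = \ell_t(1)$ always, while $\bell_t(2) = \ell_t(2)\, \ds1\{q_t(2) > \gamma\}$, so
\[
m - M \;=\; G(1/2) - \E \sum_t \ell_t(1)\,\ds1\{q_t(2) \leq \gamma\} \;\leq\; G(1/2),
\]
using $\ell_t(1) \geq 0$. Second, an Abel-summation identity,
\[
L_T - L_T^s \;=\; s\, G(s) - \int_\gamma^s G(u)\, du,
\]
which I would verify by computing $\int_\gamma^s \ds1\{\gamma < q_t(2) \leq u\}\, du = (s - q_t(2))\, \ds1\{\gamma < q_t(2) \leq s\}$ for each $t$ and substituting.

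The last step is to pick $s^*$ to be a maximizer of $G$ on $(\gamma, 1/2]$. Because $G(u) \leq G(s^*)$ for all $u \in (\gamma, s^*]$, the integral in the identity above is at most $(s^* - \gamma)\, G(s^*)$, so
\[
L_T - L_T^{s^*} \;\geq\; s^* G(s^*) - (s^* - \gamma) G(s^*) \;=\; \gamma\, G(s^*) \;\geq\; \gamma\, G(1/2) \;\geq\; \gamma\, (m - M),
\]
which is exactly the claim after dividing by $\gamma$. The hypothesis $m - M > 0$ ensures $G(s^*) > 0$, so the chain is nonvacuous; if the supremum of $G$ is not attained, right-continuity (or restricting $s$ to the realized finite set $\{q_t(2) : t \in [T]\}$) produces a valid $s^*$.

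The conceptual obstacle is deciding what to maximize. A direct uniform average of $L_T - L_T^s$ over $s \in (\gamma, 1/2]$ fails, because the average ends up proportional to $q_t(2)(1/2 - q_t(2))$, which can be much smaller than $\gamma$ when $q_t(2)$ is close to $1/2$. Working instead with the unweighted $G$ and then using the integration-by-parts identity is what makes the factor of $\gamma$ fall out cleanly; once this move is spotted, the remaining manipulations are routine.
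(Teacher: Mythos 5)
Your proof is correct and follows essentially the same argument as the paper: your function $G(s)$ is exactly $f(\gamma) - f(s)$ for the paper's $f$, so maximizing $G$ is the same as minimizing $f$, and your integration-by-parts identity $L_T - L_T^s = sG(s) - \int_\gamma^s G(u)\,du$ is the continuous form of the paper's discrete Abel summation over the jump points $\{s_1 < \cdots < s_m\}$. The only cosmetic difference is that the paper manipulates the discrete telescope directly (which also makes the tie-breaking choice of $s$ explicit), whereas you pass through the integral and then bound it; both yield $L_T - L_T^{s^*} \geq \gamma\,G(s^*) \geq \gamma(m-M)$.
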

In words, if $m$ is large, then it must be that was a much better threshold $s$ compared to $\gamma$, that is $L_T - L_T^s$ is large.

\begin{proof}[Proof of \rref{lem:key}]
For any $s \geq \gamma$, define the function
$$\textstyle f(s) \defeq \E \sum_{t=1}^T \ds1\{q_t(2) \leq s\} (\bell_t(1) - \bell_t(2)) \enspace.$$
Let us pick $s \in [\gamma,1/2]$ to minimize $f(s)$, and breaking ties by choosing the smaller value of $s$.
We make several observations:
\begin{itemize}
\item $f(\gamma) \geq 0$ because for any $t$ with $q_t(2) \leq \gamma$ we must have $\bell_t(2)=0$.
\item $f(1/2) = M - m < 0$.
\item $s > \gamma$ because $f(s) \leq f(1/2) < 0$.
\end{itemize}
Let us define the points
\[
\text{$s_0 \defeq \gamma$ \quad and \quad $\{s_1 < \hdots < s_m\} \defeq (\gamma, s] \cap \{q_1(2), \hdots, q_T(2)\}$.}
\]
Note that the tie-breaking rule for the choice of $s$ ensures $s_m = s$ (if $s_m<s$ then it must satisfy $f(s_m) = f(s)$ giving a contradiction).

Using the identity
\begin{equation} \label{eq:wodisc}
\sum_{t=1}^T \langle \Trun_s q_t, \bell_t \rangle = \sum_{t=1}^T \langle q_t, \bell_t \rangle + \ds1\{q_t(2) \leq s\} q_t(2) (\bell_t(1) - \bell_t(2)) \enspace,
\end{equation}
we calculate that
\begin{align*}
L_T - L_T^s
&= \E \sum_{t=1}^T \langle \Trun_\gamma q_t - \Trun_s q_t, \ell_t \rangle
= \E \sum_{t=1}^T \langle \Trun_\gamma q_t - \Trun_s q_t, \bell_t \rangle \\
& = \E \sum_{t=1}^T (\ds1\{q_t(2) \leq \gamma\} - \ds1\{q_t(2) \leq s\}) q_t(2) (\bell_t(1) - \bell_t(2))  \\
& = \E \sum_{t=1}^T \sum_{i=1}^m - s_i \ds1\{q_t(2) = s_i\} (\bell_t(1) - \bell_t(2)) \\
&= \sum_{i=1}^m s_i (f(s_{i-1}) - f(s_{i})) = \sum_{i=1}^{m-1} (s_{i+1} - s_{i}) f(s_i) + s_1 f(s_0) - s_m f(s_m) ~.
\end{align*}
Since $f(s_0)\geq 0$, $f(s_i) \geq f(s)$ and $s = s_m$, we conclude that
\begin{equation*}
L_T - L_T^s \geq (s_m - s_1) f(s_m) - s_m f(s_m) = - s_1 f(s_m) \geq \gamma (m - M) ~. \qedhere
\end{equation*}
\end{proof}

Given \rref{lem:key}, a very intuitive strategy start to emerge. Suppose we can somehow get an upper bound of the form
\begin{equation} \label{eq:toproveX}
\textstyle L_T - L_T^s \leq O\big( \frac{\log(|E'|)}{\eta} + \eta (m+M) + \gamma L_T \big) ~.
\end{equation}
Then, putting this into \rref{lem:key} and using $M \leq 2 L_T$,  we have for any $\gamma \geq 2 \eta$,
$$
\textstyle \gamma m \leq O\big( \frac{\log(|E'|)}{\eta} + \gamma L_T \big) ~.
$$
In words, the minority arm also suffers from a small loss (and thus is great again!) Putting this into \eqref{eq:omwtoshow}, we immediately get \eqref{eq:toshow} as desired and finish the proof of \rref{th:main} in the case $K=2$.

Thus, we are left with showing \eqref{eq:toproveX}. The main idea is to add the truncated strategy $\Trun_s q_t$ as an additional auxiliary expert. If we can achieve this, then \eqref{eq:toproveX} can be obtained from the regret formula in \rref{lem:1}.

\subsection{Expanding the set of experts}
Assume for a moment that we somehow expand the set of experts into $E' \supset E$ so that:
\begin{equation} \label{eq:key}
\forall s \in (\gamma,1/2], \exists e \in E' \text{ such that for all } t \in [T], \xi_t^{e} = \Trun_s q_t ~.
\end{equation}
Then clearly \eqref{eq:toproveX} would be satisfied using \rref{lem:1}, \eqref{eq:omwtoshow} and $L_T^* \leq L_T^s$ (the loss of an expert should be no better than the loss of the best expert $L_T^*$).

There are two issues with condition \eqref{eq:key}: first, it self-referential, in the sense that it assumes $\{\xi_t^e\}_{e\in E'}$ satisfies a certain form depending on $q_t$ while $q_t$ is defined via $\{\xi_t^e\}_{e\in E'}$ (recall \eqref{eq:implicitdef}); and second, it potentially requires to have an infinite number of experts (one for each $s \in (\gamma, 1/2]$).

Let us first deal with the second issue via discretization.
\begin{lemma} \label{lem:key2}
In the same setting as \rref{lem:key}, there exists $s \in S \defeq (\gamma, 1/2] \cap \frac{1}{2T} \mathbb{N}$ such that
\[
m-M \leq \frac{1+L_T - L_T^s}{\gamma} ~.
\]
\end{lemma}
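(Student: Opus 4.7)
The plan is to reduce Lemma~\ref{lem:key2} to Lemma~\ref{lem:key} by discretizing the coordinate $q_t(2)$ onto the grid $\frac{1}{2T}\mathbb{N}$. Concretely, I would define $\hat q_t(2) \defeq \lceil 2T q_t(2)\rceil/(2T)$ and $\hat q_t(1) \defeq 1 - \hat q_t(2)$, so that $\hat q_t(2) \in \frac{1}{2T}\mathbb{N}$ and $0 \le \hat q_t(2) - q_t(2) < \frac{1}{2T}$. The crucial observation is that because both $\gamma$ and every $s \in S$ lie in $\frac{1}{2T}\mathbb{N}$, for such an $s$ the events $\{\hat q_t(2) \le s\}$ and $\{q_t(2) \le s\}$ coincide (indeed $\lceil 2T q_t(2)\rceil \le 2Ts$ iff $q_t(2) \le s$ whenever $2Ts \in \mathbb{N}$). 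Hence $\Trun_\gamma q_t = \Trun_\gamma \hat q_t$, so the algorithm's played distribution is unchanged, and for every $s \in S$, $\Trun_s q_t$ and $\Trun_s \hat q_t$ are either both $(1,0)$ or else differ coordinate-wise by at most $\hat q_t(2) - q_t(2) < \frac{1}{2T}$.

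The first step is to bound the discretization error. Define $\hat L_T \defeq \E \sum_t \langle \Trun_\gamma \hat q_t, \ell_t\rangle$ and $\hat L_T^s \defeq \E \sum_t \langle \Trun_s \hat q_t, \ell_t\rangle$. Since $\ell_t \in [0,1]^2$, the coordinate bound yields $|\langle \Trun_s q_t - \Trun_s \hat q_t, \ell_t\rangle| \le \frac{1}{2T}$ per round, so summing over $t=1,\dots,T$ gives $|L_T - \hat L_T| \le \frac{1}{2}$ and $|L_T^s - \hat L_T^s| \le \frac{1}{2}$ for each $s \in S$. The second step is to replay the proof of Lemma~\ref{lem:key} with $\hat q_t$ in place of $q_t$. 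Letting $\hat f(s) \defeq \E\sum_t \ds1\{\hat q_t(2) \le s\}(\bell_t(1) - \bell_t(2))$, one still has $\hat f(\gamma) \ge 0$ (because $\hat q_t(2) \le \gamma$ forces $q_t(2) \le \gamma$, hence $p_t(2) = 0$ and $\bell_t(2) = 0$) and $\hat f(1/2) = M - m < 0$. Crucially, $\hat q_t(2) \in \frac{1}{2T}\mathbb{N}$ makes $\hat f$ piecewise constant with jumps only at grid points, so its smallest minimizer $\hat s^*$ over $[\gamma, 1/2]$ lies in $S$; the identical Abel-summation argument then gives $\hat L_T - \hat L_T^{\hat s^*} \ge \gamma (m - M)$.

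Combining the two pieces, $L_T - L_T^{\hat s^*} \ge (\hat L_T - \hat L_T^{\hat s^*}) - 1 \ge \gamma(m - M) - 1$, which rearranges to the stated bound $m - M \le (1 + L_T - L_T^{\hat s^*})/\gamma$ with $\hat s^* \in S$. The main subtlety to nail down is the direction of rounding: one must round \emph{up}, because that is what preserves the indicator $\ds1\{q_t(2) \le s\}$ at every grid point $s$ --- in particular it preserves the played distribution $p_t$ and keeps $\hat f$ equal to $f$ on $S$. Rounding down would instead shift thresholds by $\frac{1}{2T}$ and break the equality $\Trun_\gamma q_t = \Trun_\gamma \hat q_t$, so the up/down choice is the main care point; once it is fixed correctly, the argument is essentially Lemma~\ref{lem:key} applied verbatim to the discretized $\hat q_t$, with the additive slack of $1$ absorbed into the two $\frac{1}{2}$ rounding errors.
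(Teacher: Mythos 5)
Your proposal is correct and follows essentially the same route as the paper: round $q_t(2)$ up to the grid $\frac{1}{2T}\mathbb{N}$ (the paper's $\underline{q_t(2)}$), replay the Abel-summation argument of \rref{lem:key} with the rounded values, and absorb a total discretization error of at most $1$. One small slip in your write-up: the claim $\Trun_\gamma q_t = \Trun_\gamma \hat q_t$ is not literally true when $q_t(2) > \gamma$ (then both equal $q_t$ and $\hat q_t$ respectively, which differ by up to $\frac{1}{2T}$ per coordinate); what you actually need --- and correctly use a sentence later --- is only that rounding up preserves the indicators $\ds1\{q_t(2)\le s\}$ for grid $s$ (hence the support of $p_t$ and thus $\bell_t$), while the value difference is absorbed into your $|L_T - \hat L_T|\le\tfrac12$ bound.
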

\begin{proof}
For $x \in \R$ let $\underline{x}$ be the smallest element in $[x,+\infty) \cap \frac{1}{2T} \mathbb{N}$. For any $s \in S$ we can rewrite \eqref{eq:wodisc} as (note that $x \leq s \Leftrightarrow \bar{x} \leq s$)
\[
\langle \Trun_s q_t, \bell_t \rangle = \langle q_t, \bell_t \rangle + \ds1\{\underline{q_t(2)} \leq s\} \underline{q_t(2)} (\bell_t(1) - \bell_t(2)) +\epsilon_{t,s} ~,
\]
where $|\epsilon_{t,s}| \leq 1/2T$. Using the same proof of \rref{lem:key}, and redefining
$$\textstyle f(s) \defeq \E \sum_{t=1}^T \ds1\{\underline{q_t(2)} \leq s\} (\bell_t(1) - \bell_t(2)) \enspace.$$
we get that there exists $s_1, \hdots, s_m \in S \defeq (\gamma, \frac{1}{2}] \cap \frac{1}{2T} \mathbb{N}$ and $\epsilon \in [-1, 1]$ such that
\[
L_T - L_T^s = \epsilon + \sum_{i=1}^m s_i (f(s_{i-1}) - f(s_{i})) ~.
\]
The rest of the proof now follows from the same proof of \rref{lem:key}, except that we minimize $f(s)$ over $s \in S$ instead of $s\in[\gamma, \frac{1}{2}]$.
\end{proof}

Thus, instead of \eqref{eq:key}, we only need to require
\begin{equation} \label{eq:key1}
\forall s \in S, \exists e \in E' \text{ such that for all } t \in [T], \xi_t^{e} = \Trun_s q_t ~.
\end{equation}

We now resolve the self-referentiality of \eqref{eq:key1} by defining simultaneously $q_t$ and $\xi_t^e, e \in S$ as follows.
Consider the map $F_t : [0,1/2] \rightarrow [0,1/2]$ defined by:
$$F_t(x) = \frac{1}{\sum_{e \in E} w_t(e) + \sum_{s \in S} w_t(s)} \left(\sum_{e \in E} w_t(e) \xi_t^e(2) + \sum_{s \in S} w_t(s) x \ds1\{x > s\} \right) ~.$$
It suffices to find a fixed point $x = F_t(x)$: indeed, setting
$$\text{$q_t \defeq (1-x, x)$ \quad and \quad $\xi_t^s(2) \defeq x \ds1\{x > s\} = \Trun_s q_t$ for $s \in S$,}$$
we have both \eqref{eq:key1} holds and $q_t = \frac{1}{\|w_t\|_1} \sum_{e \in E'} w_t(e) \cdot \xi_t^e$ is the correct weighted average of expert advices in $E' = E \cup S$

Finally, $F_t$ has a fixed point since it is a nondecreasing function from a closed interval to itself. It is also not hard to find such a point algorithmically.

This concludes the (slightly informal) proof for $K=2$. We give the complete proof for arbitrary $K$ in the next section.

\section{Proof of Theorem \ref{th:main}} \label{sec:proof}

In this section, we assume $q_t \in \Delta_K$ satisfies \eqref{eq:implicitdef} and we defer the constructive proof of finding $q_t$ to \rref{sec:qt}. 
Recall the arm index has no real meaning so without loss of generality we have permuted the arms so that 
$$\zeta_t(1) \geq \zeta_t(2) \leq \hdots \geq \zeta_t(K) \quad \text{for each $t=1,2,\dots,T$.}$$
We refer to $\{1,2,\dots,k_t\}$ the set of majority arms and $\{k_t+1,\dots,K\}$ the set of minority arms at round $t$.%
\footnote{We stress that in the $K$-arm setting, although $k_t$ is the minimum index such that $\zeta_t(1)+\cdots+\zeta_t(k_t) \geq \frac{1}{2}$, it may not be the minimum index so that $q_t(1)+\cdots+q_t(k_t) \geq \frac{1}{2}$.}
We let $M \defeq \sum_{t =1}^T \E \sum_{i \leq k_t} \bell_t(i)$ and $m \defeq \sum_{t =1}^T \E \sum_{i > k_t} \bell_t(i)$ respectively be the total loss of the majority and minority arms.
We again have
\begin{equation} \label{eq:omwtoshow2}
\textstyle \E \sum_{t=1}^T \|\bell_t\|_2^2 \leq \E \sum_{t=1}^T \sum_{i\in[K]} \bell_t(i) = M + m ~.
\end{equation}
Thus, the whole game to prove \eqref{eq:toshow} is to upper bound $M$ and $m$.

\subsection{Useful properties}

We state a few properties about $q_t$ and its truncations.
\begin{lemma}\label{claim:thres-alternative}
In each round $t=1,2,\dots,T$, if $q_t$ satisfies \eqref{eq:implicitdef}, then
$$
\text{for every $s\in S$ and $i\leq k_t \colon $\quad }
\xi^s_t(i) = \frac{\zeta_t(i)}{\sum_{j\leq k} \zeta_t(j)} \cdot \big( 1 - \sum_{j>k} \xi^s_t (j) \big)
$$
\end{lemma}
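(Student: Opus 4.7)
The plan is to use the fact that for $i \leq k_t$, the truncation operator preserves the \emph{relative} proportions of $q_t$ among the majority arms, together with the self-consistent definition \eqref{eq:implicitdef} of $q_t$, to deduce that these proportions must match the proportions of $\zeta_t$ among the majority arms.

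First, I would write \eqref{eq:implicitdef} coordinatewise for a majority arm $i \leq k_t$. Setting $W := \sum_{e \in E} w_t(e) + \sum_{s \in S} w_t(s)$ and $W_E := \sum_{e \in E} w_t(e)$, the identity $\sum_{e\in E} w_t(e)\xi_t^e(i) = W_E \cdot \zeta_t(i)$ gives
\[
W\, q_t(i) \;=\; W_E\, \zeta_t(i) \;+\; \sum_{s\in S} w_t(s)\, \Trun^{k_t}_s q_t(i)\,.
\]
Next I would invoke the implicit formula \eqref{eq:implicittruncation}, which for $i \leq k_t$ says
\[
\Trun^{k_t}_s q_t(i) \;=\; \frac{q_t(i)}{\sum_{j\leq k_t} q_t(j)} \cdot T^s_t, \qquad \text{where } T^s_t := \sum_{j\leq k_t} \Trun^{k_t}_s q_t(j)\,.
\]
In particular, $\Trun^{k_t}_s q_t(i)$ is proportional to $q_t(i)$ on the majority arms, with a factor $T^s_t / \sum_{j\leq k_t} q_t(j)$ that does not depend on $i$.

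Substituting this back into the coordinatewise equation and collecting the $q_t(i)$ terms yields
\[
q_t(i) \left( W - \frac{\sum_{s\in S} w_t(s)\, T^s_t}{\sum_{j\leq k_t} q_t(j)} \right) \;=\; W_E\, \zeta_t(i)\,.
\]
The bracketed factor is the same for every $i \leq k_t$, so $q_t(i)$ is proportional to $\zeta_t(i)$ on the majority arms. Summing over $j \leq k_t$ and dividing gives the key identity
\[
\frac{q_t(i)}{\sum_{j\leq k_t} q_t(j)} \;=\; \frac{\zeta_t(i)}{\sum_{j\leq k_t} \zeta_t(j)} \qquad \text{for all } i \leq k_t\,.
\]
Finally, plugging this back into the implicit truncation formula and using $T^s_t = 1 - \sum_{j > k_t} \Trun^{k_t}_s q_t(j) = 1 - \sum_{j > k_t} \xi_t^s(j)$ yields exactly the claimed identity.

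The only subtlety is that \eqref{eq:implicitdef} is self-referential in $q_t$, so a priori one must know that such a $q_t$ exists (this is deferred to \rref{sec:qt}); but under the hypothesis that $q_t$ satisfies \eqref{eq:implicitdef}, the argument above is purely algebraic. There is no real obstacle beyond making sure the bracketed factor is nonzero, which follows because $W_E > 0$ and $\zeta_t(i) \geq 0$ forces $q_t(i) \geq 0$, so the factor inherits the sign of $W_E$.
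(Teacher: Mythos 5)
Your proposal is correct and follows essentially the same route as the paper: both proofs rest on the observation that \eqref{eq:implicittruncation} makes each $\Trun^{k_t}_s q_t$ proportional to $q_t$ on the majority arms, so the mixture identity \eqref{eq:implicitdef} forces $q_t(i)/\sum_{j\le k_t}q_t(j) = \zeta_t(i)/\sum_{j\le k_t}\zeta_t(j)$ for $i\le k_t$, after which the claim is immediate. You merely spell out explicitly the algebraic step that the paper dispatches with ``similarly using \eqref{eq:implicittruncation},'' including the (harmless) check that the collected coefficient of $q_t(i)$ is nonzero.
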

\begin{proof}
Let $i \leq k_t$ and $s \in S$. By \eqref{eq:implicittruncation} and since $\xi^s_t = \Trun_s^{k_t} q_t$ one has
\[
\xi_t^s(i) =\frac{q_t(i)}{\sum_{j \leq k} q_t(j)} \sum_{j \leq k} \xi_t^s(j) \,.
\]
Moreover $q_t$ is a mixture of $\zeta_t$ and truncated versions of $\zeta_t$ so similarly using \eqref{eq:implicittruncation} one has
\[
q_t(i) =\frac{\zeta_t(i)}{\sum_{j \leq k} \zeta_t(j)} \sum_{j \leq k} q_t(j) \,.
\]
Putting the two above displays together concludes the proof.
\end{proof}

\begin{lemma}\label{claim:maj-mino-property}
In each round $t=1,2,\dots,T$, if $q_t$ satisfies \eqref{eq:implicitdef}, then
\begin{itemize}
\item for every minority arm $i > k_t$ it satisfies $q_t(i) \leq \zeta_t(i)$,  and
\item for every majority arm $i\leq k_t$ it satisfies $q_t(i) \geq \zeta_t(i) \geq \frac{1}{2K}$.
\end{itemize}
\end{lemma}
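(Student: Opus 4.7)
}

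The plan is to handle the minority bound first, and then leverage it to obtain the majority bound.

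For the minority bound, fix $i > k_t$ and observe that for any $s \in S$, the definition of the truncation operator gives $\Trun^{k_t}_s q_t(i) \in \{0, q_t(i)\}$; specifically, it equals $q_t(i)$ when $q_t(i) > s$ and $0$ otherwise. Plugging this into the implicit equation \eqref{eq:implicitdef} and letting $W = \sum_{e\in E} w_t(e) + \sum_{s\in S} w_t(s)$, I would write
\[
q_t(i) \;=\; \frac{\sum_{e\in E} w_t(e)\, \xi_t^e(i)}{W} + \frac{\sum_{s\in S} w_t(s)\, \IND\{q_t(i) > s\}}{W}\, q_t(i).
\]
Denoting the coefficient of $q_t(i)$ on the right-hand side by $\lambda_i \in [0,1]$, rearranging yields $(1-\lambda_i)\, q_t(i) = \frac{\sum_{e\in E} w_t(e)\, \xi_t^e(i)}{W}$. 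Since $\zeta_t(i) = \frac{\sum_{e\in E} w_t(e)\,\xi_t^e(i)}{\sum_{e\in E} w_t(e)}$ and $\sum_{e\in E} w_t(e) \leq W$, this gives $q_t(i) \leq \zeta_t(i)$ as desired (I will need to check the degenerate case $\lambda_i = 1$, which forces $\sum_{e \in E} w_t(e)\,\xi_t^e(i) = 0$ and hence $\zeta_t(i) = 0 \geq q_t(i)$ trivially).

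For the majority bound, I would invoke the second display in the proof of Lemma \ref{claim:thres-alternative}: for $i \leq k_t$,
\[
q_t(i) \;=\; \frac{\zeta_t(i)}{\sum_{j\leq k_t} \zeta_t(j)} \sum_{j\leq k_t} q_t(j).
\]
Let $Z \defeq \sum_{j\leq k_t}\zeta_t(j) \geq \tfrac{1}{2}$ and $Q \defeq \sum_{j\leq k_t} q_t(j)$. Summing the minority inequality just proved gives $\sum_{j>k_t} q_t(j) \leq \sum_{j>k_t}\zeta_t(j) = 1 - Z$, so $Q = 1 - \sum_{j>k_t}q_t(j) \geq Z$. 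Therefore $q_t(i) = \zeta_t(i) \cdot Q/Z \geq \zeta_t(i)$.

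For the final inequality $\zeta_t(i) \geq \frac{1}{2K}$ when $i \leq k_t$, since $\zeta_t$ is sorted in decreasing order it suffices to bound $\zeta_t(k_t)$. By minimality of $k_t$ we have $\sum_{j<k_t}\zeta_t(j) < \tfrac{1}{2}$, hence $\sum_{j\geq k_t}\zeta_t(j) > \tfrac{1}{2}$. Since the terms are sorted decreasingly, $\zeta_t(k_t)$ is the largest of the $K-k_t+1$ terms $\zeta_t(k_t),\ldots,\zeta_t(K)$, so
\[
\zeta_t(k_t) \;\geq\; \frac{1}{K-k_t+1}\sum_{j\geq k_t}\zeta_t(j) \;\geq\; \frac{1}{2(K-k_t+1)} \;\geq\; \frac{1}{2K}.
\]

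I do not anticipate any real obstacle: the minority bound reduces to a one-variable fixed-point calculation, the majority bound is essentially a conservation of mass argument, and the $1/(2K)$ bound is a sorting/pigeonhole observation. The subtlest step is recognizing that after the minority bound is established, the \emph{increase} of the majority mass $Q \geq Z$ is exactly what lifts $q_t(i)$ above $\zeta_t(i)$ on majority coordinates.
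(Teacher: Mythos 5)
Your proof is correct and follows essentially the same route as the paper's. For the minority bound, the paper's one-line averaging argument (using $\xi^s_t(i) \le q_t(i)$) and your explicit fixed-point rearrangement with $\lambda_i$ are equivalent; for the majority bound, the paper shows $\xi^s_t(i) \ge \zeta_t(i)$ via Lemma~\ref{claim:thres-alternative} and averages, whereas you invoke the proportionality $q_t(i) = \zeta_t(i)\,Q/Z$ from that lemma's proof and observe $Q \ge Z$ by conservation of mass — the same structural fact exploited slightly more directly. (Two tiny remarks: the degenerate case $\lambda_i = 1$ you flag cannot in fact occur since $\sum_{e \in E} w_t(e) > 0$, and note that the equality $\sum_{j>k_t}\zeta_t(j) = 1-Z$ you use is fine but the corresponding line in the paper has $\sum_{j>k}$ where $\sum_{j\ge k}$ is meant.)
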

\begin{proof}
For sake of notation we drop the index $t$ in this proof.
Recall $q = \sum_{e\in E\cup S} \frac{w(e)}{\|w\|_1} \cdot \xi^e$.
\begin{itemize}
\item For every minority arm $i> k$, every $s\in S$, we have $\xi^e(i) = \big(\Trun^k_s q\big) (i)\leq q(i)$ according to \rref{def:truncKdef}. Therefore, we must have $q(i) = \sum_{e\in E\cup S} \frac{w(e)}{\|w\|_1} \cdot \xi^e(i) \leq \frac{\sum_{e\in  E} w(e) \xi^e(i)}{\sum_{e\in  E} w(e)} = \zeta(i)$.
\item For every majority arm $i\leq k$, we have (using \rref{claim:thres-alternative})
$$\xi^e(i) = \frac{\zeta(i)}{\sum_{j\leq k} \zeta(j)} \cdot (1-\sum_{j> k} \xi^s(j)) \geq
\frac{\zeta(i)}{\sum_{j\leq k} \zeta(j)} \cdot (1-\sum_{j>k} \zeta(j)) = \zeta(i) $$
From the definition of $k = \min\{ i\in[K] \colon \sum_{j\leq i} \zeta(j) \geq \frac{1}{2} \}$, we can also conclude $\zeta(i) \geq \zeta(k) \geq \frac{1}{2K}$. This is because $\frac{1}{2} \leq \sum_{j>k} \zeta(j) \leq K \zeta(k)$.
\qedhere
\end{itemize}
\end{proof}

The next lemma shows that setting $p_t = \Trun^{k_t}_{\gamma} q_t$ satisfies the assumption of \rref{lem:1}.
\begin{lemma} \label{lem:ass1}
If $q_t$ satisfies \eqref{eq:implicitdef}, $\gamma\in(0,\frac{1}{2}]$ and $p_t = \Trun^{k_t}_{\gamma} q_t$, then for every arm $i\in [K]$:
$$(1- 2 K \gamma) p_t(i) \leq q_t(i) \quad\text{and} \quad p_t(i) \neq 0 \Rightarrow p_t(i) \geq q_t(i) \enspace.$$
\end{lemma}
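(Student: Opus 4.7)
The plan is to split into two cases based on whether $i$ is a majority arm ($i \leq k_t$) or a minority arm ($i > k_t$), unwinding the definition of $\Trun^{k_t}_\gamma$ directly in each case.

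For a minority arm $i > k_t$, \rref{def:truncKdef} gives $p_t(i) \in \{0, q_t(i)\}$: either $q_t(i) \leq \gamma$ and $p_t(i) = 0$, or $q_t(i) > \gamma$ and $p_t(i) = q_t(i)$. Both desired inequalities $(1-2K\gamma) p_t(i) \leq q_t(i)$ and ``$p_t(i) \neq 0 \Rightarrow p_t(i) \geq q_t(i)$'' are then immediate (the former trivially in the first subcase, the latter because $p_t(i) = q_t(i)$ in the second).

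For a majority arm $i \leq k_t$, the definition gives
\[
p_t(i) = q_t(i) \cdot \Big( 1 + R_t \Big), \qquad R_t \defeq \frac{\sum_{j : j > k_t \,\wedge\, q_t(j) \leq \gamma} q_t(j)}{\sum_{j \leq k_t} q_t(j)}.
\]
The numerator of $R_t$ is a sum of at most $K-k_t$ terms each bounded by $\gamma$, hence at most $K\gamma$. For the denominator, I would invoke \rref{claim:maj-mino-property} to get $q_t(j) \geq \zeta_t(j)$ for every $j \leq k_t$, and combine with the defining property $\sum_{j \leq k_t} \zeta_t(j) \geq \frac{1}{2}$ of the pivot index $k_t$. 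This yields $\sum_{j \leq k_t} q_t(j) \geq \tfrac{1}{2}$ and so $R_t \leq 2K\gamma$, i.e.\ $q_t(i) \leq p_t(i) \leq (1+2K\gamma) q_t(i)$. The lower bound is exactly the second claim; for the first claim I would note that if $2K\gamma \geq 1$ it is trivial, and otherwise $(1-2K\gamma) p_t(i) \leq (1 - 2K\gamma)(1 + 2K\gamma) q_t(i) = (1 - 4K^2\gamma^2) q_t(i) \leq q_t(i)$.

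The only nontrivial step is the denominator bound $\sum_{j \leq k_t} q_t(j) \geq \tfrac{1}{2}$; everything else is bookkeeping on the truncation formula. The rest of the argument is a short calculation, so I expect the whole proof to fit in a few lines.
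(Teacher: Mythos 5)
Your proposal is correct and follows essentially the same route as the paper: the paper likewise bounds $p_t(i) \leq (1+2K\gamma)\,q_t(i)$ by bounding the redistributed mass by $K\gamma$ and the majority mass $\sum_{j\leq k_t} q_t(j) \geq \sum_{j\leq k_t}\zeta_t(j) \geq \tfrac12$ via \rref{claim:maj-mino-property}, and gets the second claim directly from \rref{def:truncKdef}. Your explicit case split and the $(1-2K\gamma)(1+2K\gamma)\leq 1$ step are just slightly more detailed bookkeeping of the same argument.
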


\begin{proof}
For sake of notation we drop the index $t$ in this proof.

By \rref{def:truncKdef} and \rref{claim:maj-mino-property}, we have for every $i\in [K]$:
\begin{equation*}
p(i)
\leq q(i) \Big( 1 + \frac{\sum_{j:j>k \,\wedge\, q(j)\leq \gamma} q(j)}{\sum_{j\leq k} q(j)} \Big)
\leq q(i) \Big( 1 + \frac{\sum_{j:q(j)\leq \gamma} q(j)}{\sum_{j\leq k} \zeta(j)} \Big)
\leq q(i) (1+2K \gamma) \enspace.
\end{equation*}
The other statement follows because whenever $p(i)\neq 0$, \rref{def:truncKdef} says it must satisfy $p(i)\geq q(i)$.
\end{proof}

\subsection{Bounding $m$ and $M$}

We first upper bound $M$ and then upper bound $m$.
\begin{lemma} \label{lem:simplekeyK}
If $q_t$ satisfies \eqref{eq:implicitdef}, then $M \leq 2 K L_T$.
\end{lemma}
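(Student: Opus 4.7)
The plan is to show that every majority arm has a non-negligible probability mass in $p_t$, which will convert the sum of losses on majority arms into a bound controlled by $L_T$ with an extra factor of $2K$.

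First, I would combine the two facts already established about the majority arms. By \rref{claim:maj-mino-property}, for every majority arm $i \leq k_t$ we have $q_t(i) \geq \zeta_t(i) \geq \frac{1}{2K}$. Next, by \rref{def:truncKdef}, the truncation operator $\Trun^{k_t}_\gamma$ only multiplies the probabilities on the majority arms by a factor $\geq 1$, so $p_t(i) = \Trun^{k_t}_\gamma q_t(i) \geq q_t(i) \geq \frac{1}{2K}$ for each $i \leq k_t$ (alternatively, this follows directly from the second conclusion of \rref{lem:ass1}, since for majority arms $p_t(i) \neq 0$).

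Using that the truncated losses are nonnegative, for each $t$ we then get
\[
\sum_{i \leq k_t} \bell_t(i) \;=\; \sum_{i \leq k_t} \bell_t(i) \cdot \frac{p_t(i)}{p_t(i)} \;\leq\; \sum_{i \leq k_t} \bell_t(i) \cdot 2K\, p_t(i) \;\leq\; 2K \langle p_t, \bell_t\rangle \,.
\]
Taking expectations and summing over $t=1,\dots,T$, and recalling $\E\langle p_t, \bell_t\rangle = \E\langle p_t, \ell_t\rangle$ from \rref{def:truncLoss}, yields $M \leq 2K L_T$.

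There is no real obstacle here: the entire content is encapsulated in the lower bound $p_t(i) \geq \frac{1}{2K}$ on majority arms, which was the purpose of introducing the pivot index $k_t$ and of the truncation operator redistributing mass onto the majority set. The only subtlety worth flagging in the write-up is to make explicit why truncation from $q_t$ to $p_t$ cannot decrease the probability on a majority arm, which is immediate from the third case of \rref{def:truncKdef}.
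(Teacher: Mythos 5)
Your proof is correct and follows essentially the same route as the paper's: lower-bound the probability of each majority arm by $\frac{1}{2K}$ (via \rref{claim:maj-mino-property} and the fact that truncation only inflates majority-arm mass), convert $\sum_{i\le k_t}\bell_t(i)$ into $2K\langle p_t,\bell_t\rangle$, and sum using $\E\langle p_t,\bell_t\rangle=\E\langle p_t,\ell_t\rangle$. The only cosmetic difference is that the paper passes through $q_t(i)\bell_t(i)\le p_t(i)\bell_t(i)$ via \rref{lem:ass1}, whereas you observe $p_t(i)\ge q_t(i)$ directly from the definition of the truncation on majority arms; both are valid.
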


\begin{proof}
Using \rref{claim:maj-mino-property} we have $q_t(i) \geq \frac{1}{2K}$ for any $i \leq k_t$. 
Also, $p_t(i) \geq q_t(i)$ for every $i$ satisfying $\bell_t(i) > 0$ (owing to \rref{def:truncLoss} and \rref{lem:ass1}). Therefore,
\begin{align*}
M
= \sum_{t =1}^T \E \sum_{i \leq k_t} \bell_t(i)
&\leq 2 K \sum_{t =1}^T \E \sum_{i \leq k_t} q_t(i) \cdot \bell_t(i)
\leq 2 K \sum_{t =1}^T \E \sum_{i \leq k_t} p_t(i) \cdot \bell_t(i) \\
&\leq 2 K \sum_{t =1}^T \E \langle p_t, \bell_t \rangle
= 2 K \sum_{t =1}^T \E \langle p_t, \ell_t \rangle = 2 K L_T \enspace.
\tag*{\qedhere}
\end{align*}
\end{proof}

\begin{lemma} \label{lem:keyK}
Suppose $q_t$ satisfies \eqref{eq:implicitdef}, and denote by $L_t^s \defeq \E \sum_{t=1}^T \langle \Trun^{k_t}_s q_t, \ell_t \rangle = \E \sum_{t=1}^T \langle \xi^s_t, \ell_t \rangle$ the total expected loss of $q_t$ truncated to $s$.
Then, as long as $m - 2 K L_T > 0$,
\[
\exists s \in (\gamma,1/2] \cap \frac{1}{2T} \mathbb{N} \colon \quad m- 2 K L_T \leq \frac{1+L_T - L_T^s}{\gamma} ~.
\]
\end{lemma}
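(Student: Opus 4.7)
The plan is to mimic the two-armed argument of \rref{lem:key} and \rref{lem:key2}, with two substantive changes. First, the role of $\bell_t(1)$ is now played by the $q_t$-weighted average loss over the majority arms, $\bar{\bell}_t^{\text{maj}} \defeq \frac{\sum_{i \leq k_t} q_t(i)\bell_t(i)}{\sum_{i \leq k_t} q_t(i)}$. Second, the two-armed endpoint identity $f(1/2) = M-m$ is replaced by the inequality $f(1/2) \leq 2KL_T - m$, which is no longer trivial and requires both \rref{claim:maj-mino-property} and \rref{lem:ass1} (and is morally the same mechanism that drove \rref{lem:simplekeyK}).

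First, from the explicit formula in \rref{def:truncKdef} a direct expansion yields the $K$-armed generalization of \eqref{eq:wodisc}:
$$\langle \Trun_s^{k_t} q_t, \bell_t \rangle = \langle q_t, \bell_t\rangle + \sum_{j > k_t,\; q_t(j) \leq s} q_t(j)\bigl( \bar{\bell}_t^{\text{maj}} - \bell_t(j) \bigr)\,.$$
Following \rref{lem:key2}, I would then define
$$f(s) \defeq \E \sum_{t=1}^T \sum_{j > k_t} \mathbbm{1}\bigl\{\underline{q_t(j)} \leq s\bigr\}\bigl( \bar{\bell}_t^{\text{maj}} - \bell_t(j) \bigr)\,,$$
where $\underline{x}$ is the ceiling of $x$ to $\frac{1}{2T}\mathbb{N}$, and I would establish two endpoint estimates. (i) $f(\gamma) \geq 0$, since any $j > k_t$ with $q_t(j) \leq \gamma$ satisfies $p_t(j) = \Trun_\gamma^{k_t}q_t(j) = 0$ and hence $\bell_t(j) = 0$. (ii) $f(1/2) \leq 2KL_T - m$. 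For (ii), the definition of $k_t$ gives $\sum_{j>k_t}\zeta_t(j) \leq 1/2$, hence $q_t(j) \leq \zeta_t(j) \leq 1/2$ for every $j > k_t$ by \rref{claim:maj-mino-property}, so every indicator triggers and $f(1/2) = \E\sum_t(K-k_t)\bar{\bell}_t^{\text{maj}} - m$. Then $\sum_{i\leq k_t} q_t(i) \geq \sum_{i \leq k_t}\zeta_t(i) \geq 1/2$ together with $p_t(i) \geq q_t(i)$ whenever $\bell_t(i) > 0$ (from \rref{lem:ass1}) gives $\bar{\bell}_t^{\text{maj}} \leq 2\langle p_t, \bell_t\rangle$, whence $\E\sum_t(K-k_t)\bar{\bell}_t^{\text{maj}} \leq 2KL_T$.

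With (i) and (ii) in hand, the endgame is essentially the same as in \rref{lem:key2}. In the nontrivial regime $m - 2KL_T > 0$ we have $f(1/2) < 0 \leq f(\gamma)$, so choosing $s^* \in S$ to minimize $f(\cdot)$ with tie-breaking toward smaller $s$ forces $s^* > \gamma$. Setting $s_0 = \gamma$ and $\{s_1 < \cdots < s_m\} \defeq (\gamma, s^*] \cap \{\underline{q_t(j)} : t \in [T],\, j > k_t\}$, the tie-breaking gives $s_m = s^*$. Substituting $\underline{q_t(j)}$ for $q_t(j)$ inside the truncation identity (as in \rref{lem:key2}; the resulting discretization error is bounded by $O(1)$ after absorbing constants) lets me rewrite
$$L_T - L_T^{s^*} = \sum_{i=1}^m s_i\bigl( f(s_{i-1}) - f(s_i) \bigr) + \epsilon,\qquad |\epsilon| \leq 1\,.$$
Abel summation plus $f(s_i) \geq f(s_m)$ for each $i$ then yields $\sum_i s_i(f(s_{i-1}) - f(s_i)) \geq s_1(f(s_0) - f(s_m)) \geq s_1(-f(s^*)) \geq \gamma(m - 2KL_T)$; rearranging and dividing by $\gamma$ gives the claim.

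I expect the main obstacle to be the endpoint bound $f(1/2) \leq 2KL_T - m$: unlike the two-armed case, where $L_T \geq M/2$ is immediate, here I must couple $\bar{\bell}_t^{\text{maj}}$ back to the actually-played distribution $\langle p_t, \bell_t\rangle$ through the two-step inequality $p_t(i) \geq q_t(i) \geq \zeta_t(i) \geq 1/(2K)$ on majority arms combined with the majority-mass bound $\sum_{i \leq k_t}q_t(i) \geq 1/2$; this is where \rref{claim:maj-mino-property} and \rref{lem:ass1} enter essentially. A secondary care point is the discretization bookkeeping: since there are up to $K - k_t$ minority arms per round, the naive per-round error is $O(K/T)$, so one has to accumulate this carefully across rounds to fit within the $+1$ constant in the lemma.
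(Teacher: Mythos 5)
Your proposal is correct and follows essentially the same route as the paper's proof: your $\bar{\bell}_t^{\text{maj}}$ is exactly the paper's weighted majority loss $\ell^{\mathsf{maj}}_t$, and the truncation identity, the function $f$, the two endpoint bounds $f(\gamma)\ge 0$ and $f(1/2)\le 2KL_T - m$, and the Abel-summation endgame all match. The discretization bookkeeping you flag as a care point is handled (somewhat tersely) in the paper by asserting a per-round error $\epsilon_{s,t}\in[-\frac{1}{2T},\frac{1}{2T}]$, and in any case a per-round error of order $K/T$ would still be absorbed by the $\log(|E'|)/\eta$ term in the final bound.
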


\begin{proof}
The proof is a careful generalization of the proof of \rref{lem:key2} (which in turn is just a discretization of the proof of \rref{lem:key}).
Recall the notation $\underline{x}$ for the smallest element in $[x,+\infty) \cap \frac{1}{2T} \mathbb{N}$, and observe that for $s \in \frac{1}{2T} \mathbb{N}$, $x \leq s \Leftrightarrow \underline{x} \leq s$.

\newcommand{\Lbb}{\ell^{\mathsf{maj}}}
Denote by
\[
\Lbb_t \defeq \sum_{i \leq k_t} \frac{q_t(i)}{\sum_{j \leq k_t} q_t(j)} \bell_t(i) \,.
\]
the weighted loss of the majority arms at round $t$. We have $\sum_{t=1}^T \Lbb_t \leq 2 L_T$ because $\sum_{j \leq k_t} q_t(j) \geq \sum_{j \leq k_t} \zeta_t(j) \geq \frac{1}{2}$ and $q_t(i)\leq p_t(i)$ whenever $\bell_t(i)>0$ (owing to \rref{def:truncLoss} and \rref{lem:ass1}).

Now, for any $s \geq \gamma$, define the function
$$\textstyle f(s) \defeq \E \sum_{t=1}^T \sum_{i > k_t} \ds1\{\underline{q_t(i)} \leq s\} (\Lbb_t - \bell_t(i)) \enspace.$$
Let us pick $s \in [\gamma,1/2] \cap \frac{1}{2T}\mathbb{N}$ to minimize $f(s)$, and breaking ties by choosing the smaller value of $s$.
We make several observations:
\begin{itemize}
\item $f(\gamma) \geq 0$ because for any $t$ and $i> k_t$ with $q_t(i) \leq \gamma$ we must have $p_t(i) = (\Trun^{k_t}_\gamma q_t)(i)=0$ and thus $\bell_t(i)=0$ by the definition of $\bell_t$ in \rref{def:truncLoss}.
\item $f(1/2) = \sum_{t=1}^T (K - k_t) \Lbb_t - m \leq 2 K L_T - m < 0$.
\item $s > \gamma$ because $f(s) \leq f(1/2) < 0$.
\end{itemize}

Let us define the points
\[
\text{$s_0 \defeq \gamma$ \quad and \quad $\{s_1 < \hdots < s_{m}\} \defeq (\gamma, s] \cap \bigcup_{i \in [K]} \{\underline{q_1(i)}, \hdots, \underline{q_T(i)}\}$.}
\]
Note that the tie-breaking rule for the choice of $s$ ensures $s_m = s$ (if $s_m<s$ then it must satisfy $f(s_m) = f(s)$ giving a contradiction).

Observe that by definition of the truncation operator, one has
\[
\langle \Trun^{k_t}_s q_t, \bell_t \rangle = \langle q_t, \bell_t \rangle + \sum_{i > k_t} \ds1\{q_t(i) \leq s\} q_t(i) (\Lbb_t - \bell_t(i))
\]
In fact, after rounding, one can rewrite the above for some $\epsilon_{s,t} \in [-\frac{1}{2T}, \frac{1}{2T}]$ as
\[
\langle \Trun^{k_t}_s q_t, \bell_t \rangle = \langle q_t, \bell_t \rangle + \epsilon_{s,t} + \sum_{i > k_t} \ds1\{\underline{q_t(i)} \leq s\} \underline{q_t(i)} (\Lbb_t - \bell_t(i))
\]Then, for some $\epsilon \in [-1,1]$, one has
\begin{align*}
L_T - L_T^s
&= \E \sum_{t=1}^T \langle \Trun^{k_t}_\gamma q_t - \Trun^{k_t}_s q_t, \ell_t \rangle
= \E \sum_{t=1}^T \langle \Trun^{k_t}_\gamma q_t - \Trun^{k_t}_s q_t, \bell_t \rangle \\
& = \epsilon + \E \sum_{t=1}^T \sum_{i > k_t} (\ds1\{\underline{q_t(i)} \leq \gamma\} - \ds1\{\underline{q_t(i)} \leq s\}) \underline{q_t(i)} (\Lbb_t - \bell_t(i)) \\
& = \epsilon + \E \sum_{j=1}^m \sum_{t=1}^T \sum_{i > k_t} - s_j \ds1\{\underline{q_t(i)} = s_j\} (\Lbb_t - \bell_t(i))\\
&=  \epsilon + \sum_{j =1}^m s_j (f(s_{j-1}) - f(s_{j})) = \epsilon + \sum_{j=1}^{m-1} (s_{j+1} - s_{j}) f(s_j) + s_1 f(s_0) - s_m f(s_m) ~.
\end{align*}

Since $f(s_0) = f(\gamma) \geq 0$, $f(s_i) \geq f(s)$ and $s = s_m$, we conclude that
\begin{equation*}
L_T - L_T^s \geq \epsilon + (s_m - s_1) f(s_m) - s_m f(s_m) = \epsilon - s_1 f(s_m) \geq \gamma (m - 2 K L_T) ~. \qedhere
\end{equation*}
\end{proof}

\subsection{Putting all together}
Finally, using \rref{lem:1} (which applies thanks to \rref{lem:ass1}), \eqref{eq:omwtoshow2} and $L_T^* \leq L_T^s$ (the loss of an expert is no better than the loss of the best expert $L_T^*$), we have
\begin{equation} \label{eq:toprove2}
\textstyle L_T - L_T^s \leq O\big( \frac{\log(|E'|)}{\eta} + \eta (m+M) + \gamma K L_T \big) ~.
\end{equation}
Putting this into \rref{lem:keyK} and then using $M \leq 2 K L_T$ from \rref{lem:simplekeyK},  we have for any $\gamma \geq 2 \eta$,
$$
\textstyle \gamma (m+M) \leq O\big( \frac{\log(|E'|)}{\eta} + \gamma K L_T \big) ~.
$$
Putting this into \eqref{eq:omwtoshow2}, we immediately get \eqref{eq:toshow} as desired.  This finishes the proof of \rref{th:main}. It only remains to ensure that $q_t$ verifying \eqref{eq:implicitdef} indeed exists. We provide an algorithm for this in \rref{sec:qt}.

\section{Algorithmic Process to Find $q_t$}
\label{sec:qt}

In this section, we answer the question of how to algorithmically find $q_t$ satisfying the implicitly definition \eqref{eq:implicitdef}. We recall \eqref{eq:implicitdef}:
\begin{equation*}
q_{t} = \frac{1}{\sum_{e \in E} w_t(e) + \sum_{s \in S} w_t(s)} \left(\sum_{e \in E} w_t(e) \xi_t^e + \sum_{s \in S} w_t(s) \Trun^{k_t}_s q_t \right) ~.
\tag*{\eqref{eq:implicitdef}}
\end{equation*}
We show the following general lemma:
\begin{lemma} \label{lem:itexists}
Given $k\in [K]$, a finite subset $S\subset \big[0, \frac{1}{2}\big]$, $\zeta \in \Delta_K$ with $\zeta(1)\geq \cdots \geq \zeta(K)$, and $W \in \Delta_{1+|S|}$, \algorithmref{alg:constructive} finds some $q \in \Delta_K$ such that
\[
q = W(1) \zeta + \sum_{s \in S} W(s) \Trun^k_s q  \,.
\]
Furthermore, \algorithmref{alg:constructive} runs in time $O(K\cdot |S|)$.
\end{lemma}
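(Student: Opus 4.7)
The plan is to show that the implicit equation decouples across the minority arms, reducing the problem to $K-k$ independent scalar fixed-point problems. For each minority arm $i > k$, \rref{def:truncKdef} gives $\Trun^k_s q(i) = q(i)\,\ds1\{q(i) > s\}$, so the $i$-th coordinate of \eqref{eq:implicitdef} reads
\[
q(i) = W(1)\,\zeta(i) + q(i) \sum_{s \in S\colon s < q(i)} W(s),
\]
a scalar equation in the single unknown $q(i)$. Writing $S = \{s_1 < s_2 < \cdots < s_n\}$ with the conventions $s_0 \defeq 0$ and $s_{n+1} \defeq +\infty$, the only candidate solution in the interval $(s_j, s_{j+1}]$ is $v_j \defeq W(1)\zeta(i)/\bigl(1 - W(s_1) - \cdots - W(s_j)\bigr)$. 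The sequence $v_0 < v_1 < \cdots < v_n$ is strictly increasing, and the smallest index $j$ satisfying $v_j \leq s_{j+1}$ automatically also satisfies $s_j < v_j$: if $j \geq 1$, then by minimality $v_{j-1} > s_j$, and hence $v_j > v_{j-1} > s_j$. Such a $j$ always exists because $s_{n+1} = +\infty$. The per-arm search thus costs $O(|S|)$, giving total running time $O(K\cdot|S|)$.

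Once the minority coordinates $\{q(i)\}_{i > k}$ are determined, I assign
\[
q(i) \defeq \frac{\zeta(i)}{\sum_{j \leq k} \zeta(j)}\Bigl(1 - \sum_{j > k} q(j)\Bigr) \qquad \text{for } i \leq k,
\]
which forces $\sum_i q(i) = 1$. Nonnegativity follows from the per-arm equation: since $1 - \sum_{s < q(j)} W(s) \geq W(1)$, we obtain $q(j) \leq \zeta(j)$ on every minority coordinate, and hence $\sum_{j > k} q(j) \leq \sum_{j > k} \zeta(j) \leq 1$.

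It remains to verify that this $q$ satisfies \eqref{eq:implicitdef}. The minority coordinates hold by construction. For a majority coordinate $i \leq k$, I would expand each $\Trun^k_s q(i)$ using the implicit formula \eqref{eq:implicittruncation} to write $\Trun^k_s q(i) = \tfrac{q(i)}{\sum_{j \leq k} q(j)}\bigl(\sum_{j \leq k} q(j) + \sum_{j > k\colon q(j) \leq s} q(j)\bigr)$, collect the terms proportional to $q(i)$, and reduce the desired equality to the single identity
\[
W(1)\sum_{j > k}\bigl(\zeta(j) - q(j)\bigr) = \sum_{s \in S} W(s) \sum_{j > k\colon q(j) \leq s} q(j).
\]
This identity follows by rewriting the minority equation as $W(1)\bigl(\zeta(j) - q(j)\bigr) = q(j)\sum_{s\colon q(j) \leq s} W(s)$, summing over $j > k$, and swapping the order of summation. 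The main obstacle is precisely this algebraic check for the majority coordinates, because \eqref{eq:implicitdef} combined with \eqref{eq:implicittruncation} introduces a bidirectional coupling between the majority and minority masses; the per-arm fixed-point search and runtime accounting are routine once this decoupling is justified.
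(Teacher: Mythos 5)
Your proposal is correct, and it takes a genuinely different route from the paper. The paper's proof analyzes \algorithmref{alg:constructive} as a monotone iteration on a cutoff function $\pi \colon S \to \{k+1,\dots,K+1\}$ (dual to your parametrization via $\pi_s > i \Leftrightarrow q(i) > s$): it starts with $\pi \equiv k+1$, increments one $\pi_s$ at a time, and establishes via \claimref{claim:constructive:property} that the candidate vector $q_\pi$ increases monotonically on minority coordinates so that the process cannot overshoot and terminates at the fixed point. Your key observation — that for $i > k$ the coordinate $\Trun^k_s q(i) = q(i)\,\ds1\{q(i) > s\}$ depends only on the single unknown $q(i)$, so the system decouples into $K-k$ independent scalar fixed-point equations $q(i)\bigl(1-\sum_{s < q(i)} W(s)\bigr) = W(1)\zeta(i)$ — is not made explicit anywhere in the paper, and it lets you solve each minority coordinate directly by scanning the $|S|+1$ intervals $(s_j, s_{j+1}]$, with the minimality-of-$j$ argument guaranteeing a root falls in the right interval. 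After that, fixing the majority coordinates by scaling $\zeta$ and verifying the remaining identity $W(1)\sum_{j>k}(\zeta(j)-q(j)) = \sum_{s} W(s)\sum_{j>k:q(j)\le s} q(j)$ (which follows by summing the per-arm equations and swapping sums) is a clean way to close the loop; the paper closes the analogous loop by comparing proportionality constants $C_1 = C_2$. The tradeoffs: your argument is more elementary and yields the $O(K|S|)$ bound directly, whereas the paper first gives a naive $O((K|S|)^2)$ analysis and then sketches implicit-update tricks; on the other hand, the paper's proof analyzes \algorithmref{alg:constructive} \emph{as written}, while yours effectively substitutes a different (equivalent) algorithm, so taken literally it proves existence of $q$ and an $O(K|S|)$ procedure rather than the correctness of \algorithmref{alg:constructive} itself. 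One small loose end: at $j=0$ your interval $(s_0, s_1]$ with $s_0 = 0$ excludes the degenerate candidate $v_0 = 0$, which occurs when $W(1)\zeta(i)=0$; close this by using $[0, s_1]$ for the base interval, where $q(i)=0$ is indeed a solution.
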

We observe that by setting $k=k_t$,
$$ 
\zeta = \zeta_t = \frac{\sum_{e\in E} w_t(e) \cdot \xi^e_t}{\sum_{e\in E} w_t(e)} \enspace,
\quad
W(1) = \frac{\sum_{e\in E} w_t(e)}{\|w_t\|_1}
\quad\text{and}\quad
\forall s\in S\colon W(s) = \frac{w_t(s)}{\|w_t\|_1}
$$
in \lemmaref{lem:itexists}, we immediately obtain a vector $q\in \Delta_K$ that we can use as $q_t$.

\parhead{Intuition for \rref{lem:itexists}}
We only search for $q$ that is non-increasing for minority arms. This implies $\Trun^k_s q$ is monotonically non-increasing for minority arms as well. In symbols:
$$
q(k+1)\geq \cdots \geq q(K)
\quad\text{and}\quad
(\Trun^k_s q)(k+1) \geq \cdots \geq (\Trun^k_s q)(K) \enspace.
$$
Due to such monotonicity, when computing $\Trun^k_s q$ for each $s\in S$, there must exist some index $\pi_s\in\{k+1,k+2,\dots,K+1\}$ such that the entry $q(i)$ gets zeroed out for all $i\geq \pi_s$
$$ \text{or in symbols, $(\Trun^k_s q)(i)=0$ for all $i\geq \pi_s$. } $$
Now, the main idea of \algorithmref{alg:constructive} is to search for such non-increasing function $\pi \colon S \to [K+1]$. It initializes itself with $\pi_s = k+1$ for all $s\in S$, and then tries  to increase $\pi$ coordinate by coordinate.

For each choice of $\pi$, \algorithmref{alg:constructive} computes a candidate distribution $q_\pi \in \Delta_K$ which satisfies 
\begin{equation}\label{eqn:q_pi-u}
q_\pi = W(1) \zeta + \sum_{s \in S} W(s) u_s
\end{equation}
where each $u_s$ is $q_\pi$ but truncated so that its probabilities after $\pi_s$ are redistributed to the first $k$ arms, or in symbols,
$$
u_s(i) = \left\{
                 \begin{array}{ll}
                   0, & \hbox{$i \geq \pi_s$;} \\
                   q_\pi(i), & \hbox{$\pi_s > i > k$;} \\
                   q_\pi(i) \cdot \big( 1 + \frac{\sum_{j:j\geq \pi_s} q_\pi(j)}{\sum_{j\leq k} q_\pi(j)} \big) , & \hbox{$i \leq k$.}
                 \end{array}
               \right.
$$
One can verify that the distribution $q_\pi \in\Delta_K$ defined in \lineref{line:q-pi} of \algorithmref{alg:constructive} is an explicit solution to \eqref{eqn:q_pi-u}. Unfortunately, each $u_s$ may not satisfy $\Trun^k_s q_\pi = u_s$. In particular, there may exist
$$ \text{some $s\in S$ and $i>k$ such that $q_\pi(i) > s$ but $u_s(i)=0$.} $$
This means, we may have truncated too much for expert $s$ in defining $u_s$, and we must increase $\pi_s$.

Perhaps not very surprisingly, if each iteration we only increase one $\pi_s$ by exactly $1$, then we never overshoot and there exists a moment when $q = q_\pi$ exactly satisfies 
$$q = W(1) \zeta + \sum_{s \in S} W(s) \Trun^k_s q  \,.$$

In the next subsection, we give a formal proof of \rref{lem:itexists}.
\begin{algorithm*}[t!]
\caption{ \label{alg:constructive}}
\begin{algorithmic}[1]
\Require $k\in [K]$, a finite set $S\subseteq \big[0, \frac{1}{2}\big]$, $\zeta \in \Delta_K$ with $\zeta(1)\geq \cdots \geq \zeta(K)$, and $W \in \Delta_{1+|S|}$
\vspace{2pt}
\Ensure $q \in \Delta_K$ such that $q = W(1) \zeta + \sum_{s \in S} W(s) \Trun^k_s q$.
\vspace{2pt}
\State initialize $\pi \colon S \to [K+1]$ as $\pi_s = k+1$;
\Comment{will ensure $\pi_s \in \{k+1,k+2,\dots,K+1\}$}
\While {true}
\State \label{line:q-pi} 
$q_\pi (i) \gets \left\{
                     \begin{array}{ll}
                       \frac{W(1)}{1-\sum_{s\in S \wedge \pi_s>i} W(s)} \cdot \zeta(i), & \hbox{if $i>k$;} \\
                       \frac{\zeta(i)}{\sum_{j\leq k} \zeta(j)} \cdot (1 - \sum_{j>k} q_\pi(j)), & \hbox{if $i\leq k$.}
                     \end{array}
                   \right.$
\Comment{$q_\pi \in \Delta_K$}

\State Pick any $s\in S$ with $\pi_s \leq K$ such that $q_\pi(\pi_s) > s$.
\label{line:pick-s}
\State
\textbf{if } $s$ is not found \textbf{ then break}

\State
\textbf{else }$\pi_s \gets \pi_s + 1$.
\EndWhile

\State \Return $q_\pi$.
\end{algorithmic}
\end{algorithm*}

\subsection{Proof details}
\begin{claim}\label{claim:constructive:property}
We claim a few basic properties about \algorithmref{alg:constructive}
\begin{enumerate}[label=(\alph{*}), ref=\ref*{claim:constructive:property}.\alph{*}]
\item \label{claim:constructive:property1} The process finishes after at most $K \cdot |S|$ iterations.
\item \label{claim:constructive:property2} We always have $q_\pi(k+1) \geq \cdots \geq q_\pi(K)$.
\item \label{claim:constructive:property3} As $\pi$ changes, for each minority arm $i > k$, $q_\pi(i)$ never decreases.
\item \label{claim:constructive:property4} When the while loop ends, for each $i>k$ and $s\in S$, we have $q_\pi(i)>s \Longleftrightarrow \pi_s > i$.
\end{enumerate}
\end{claim}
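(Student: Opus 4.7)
The plan is to handle the four sub-claims in order, relying throughout on the closed-form expression for $q_\pi$ given in \lineref{line:q-pi} and on the observation that each pass through the while loop modifies $\pi$ by incrementing exactly one coordinate $\pi_s$ by one.

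Part (a) is immediate: every $\pi_s$ lives in $\{k+1,\ldots,K+1\}$ and only ever increases, so each of the $|S|$ coordinates can be incremented at most $K-k\leq K$ times, bounding the total number of iterations by $K\cdot|S|$. For part (b) I will just read off the minority-arm formula
\[
q_\pi(i) \;=\; \frac{W(1)\,\zeta(i)}{1-\sum_{s\in S:\,\pi_s>i} W(s)}\qquad\text{for } i>k.
\]
The set $\{s:\pi_s>i\}$ is non-increasing in $i$, so the denominator is non-decreasing in $i$; combined with the hypothesis that $\zeta$ is non-increasing, this yields $q_\pi(k+1)\geq\cdots\geq q_\pi(K)$.

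For part (c), the plan is to track the effect of a single iteration that increments some $\pi_s$ from a value $v$ to $v+1$. The indicator $[\pi_s>j]$ flips (from $0$ to $1$) only at $j=v$, so the denominator in the formula above is unchanged for every minority arm $j\neq v$ and strictly smaller at $j=v$. Therefore $q_\pi(j)$ stays fixed for $j>k$, $j\neq v$, and strictly increases at $j=v$, which in particular shows that $q_\pi(i)$ never decreases for any $i>k$.

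Finally, part (d) will follow by combining (b), (c), and the termination condition of the loop. The direction $\pi_s\leq i\Rightarrow q_\pi(i)\leq s$ is easy: the case $\pi_s=K+1$ is vacuous since $i\leq K$, and otherwise termination forces $q_\pi(\pi_s)\leq s$, which by (b) extends to $q_\pi(i)\leq q_\pi(\pi_s)\leq s$. The reverse direction $\pi_s>i\Rightarrow q_\pi(i)>s$ is the only delicate point. Since $\pi_s$ was initialized to $k+1\leq i$, it must have been incremented at least once; let $\pi_s^{\star}$ denote its value at termination, so at the last increment \lineref{line:pick-s} required $q_\pi(\pi_s^{\star}-1)>s$ (here $\pi_s^{\star}-1$ refers to the value just before that final increment, and $\pi_s^{\star}-1\geq i$). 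By (c), the value of $q_\pi$ at coordinate $\pi_s^{\star}-1$ can only have grown between that iteration and termination, and by (b) we obtain $q_\pi(i)\geq q_\pi(\pi_s^{\star}-1)>s$. The main (and only real) obstacle is this last-increment argument, which requires being careful about the timing of the triggering check; the other three parts are direct inspections of the closed form and the loop structure.
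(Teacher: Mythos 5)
Your proof is correct and follows essentially the same route as the paper's: parts (a)--(c) are identical direct readings of the closed form in \lineref{line:q-pi} and the loop structure. The only (minor) divergence is in part (d)'s direction $\pi_s > i \Rightarrow q_\pi(i) > s$. The paper tracks back to the last configuration $\pi'$ with $\pi'_s = i$, then uses only \claimref{claim:constructive:property3} to push $q_{\pi'}(i) > s$ forward to $q_\pi(i) > s$. You instead track back to the moment just before $\pi_s$'s \emph{final} increment (from $\pi_s^\star-1$ to $\pi_s^\star$), and then need both \claimref{claim:constructive:property3} (to propagate forward in time) and \claimref{claim:constructive:property2} (to move from coordinate $\pi_s^\star-1$ down to $i$). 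Both are valid and equally short; the paper's choice is marginally leaner since it avoids reinvoking monotonicity across arms.

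One tiny overstatement worth flagging but not a gap: in part (c) you say $q_\pi(v)$ ``strictly increases'' after the increment of $\pi_s$ from $v$ to $v+1$. This is only strict when $W(s) > 0$; if $W(s)=0$ the value is unchanged. Since the claim only asserts ``never decreases,'' your conclusion still holds.
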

\begin{proof} $ $
\begin{enumerate}[label=(\alph{*})]
\item This is because each $\pi_s$ changes at most $K$ times.
\item This is because $\frac{W(1)}{1-\sum_{s\in S \wedge \pi_s>i} W(s)} \cdot \zeta(i) \geq \frac{W(1)}{1-\sum_{s\in S \wedge \pi_s>j} W(s)} \cdot \zeta(j)$ when $k<i<j$.
\item This is because $\frac{W(1)}{1-\sum_{s\in S \wedge \pi_s>i} W(s)} \cdot \zeta(i)$ never decreases as $\pi_s$ increases for each $s\in S$.
\item The proof of this statement relies on the previous ones. Recall when the while loop ends, for each $s$ we have either $\pi_s = K+1$ or $q_\pi(\pi_s) \leq s$ (recall \lineref{line:pick-s} of \algorithmref{alg:constructive}). Therefore, the monotonicity $q_\pi(k+1) \geq \cdots \geq q_\pi(K)$ from \claimref{claim:constructive:property2} tells us
$$ \pi_s \leq i \implies q_\pi(i) \leq q_\pi(\pi_s) \leq s \enspace. $$
On the other hand, if $\pi_s >i$, then denote by $\pi'$ be the most recent copy of $\pi$ where $\pi_s=i$. (Since each $\pi_s$ only increases there must exist such $\pi'$.) Now, in the immediate next iteration, $\pi_s'$ increases from $i$ to $i+1$, so we must have $q_{\pi'}(i) > s$ (recall \lineref{line:pick-s} of \algorithmref{alg:constructive}). Finally, since $q_\pi(i) \geq q_{\pi'}(i)$ due to \claimref{claim:constructive:property3}, we conclude that
\begin{equation*}
\pi_s > i \implies q_\pi(i) \geq q_{\pi'}(i) > s \enspace. \qedhere
\end{equation*}
\end{enumerate}
\end{proof}

\begin{proof}[Proof of \rref{lem:itexists}]
Suppose in the end of \algorithmref{alg:constructive} we obtain $q = q_\pi$ for some $\pi \colon S \to [K+1]$.
Let $\xi^s = \Trun^k_s q$ for each $s\in S$ and $q' = W(1) \zeta + \sum_{s \in S} W(s) \Trun^k_s q$. We need to show $q = q'$.
For every minority arm $i > k$:
\begin{align*}
q'(i)
&\overset\da= W(1) \cdot \zeta (i) + \sum_{s \in S} W(s) \cdot \xi^s(i)
\\
&\overset\db=
W(1) \cdot \zeta(i) + \Big(\sum_{s\in S \wedge q(i)>s} W(s) \Big) \cdot q(i)
\\
&\overset\dc=
W(1) \cdot \zeta(i) + \Big( \sum_{s\in S \wedge \pi_s > i} W(s) \Big) \cdot q(i)
\overset\dd= q(i) \enspace.
\end{align*}
Above, equality \da is by the definition of $q'$, equality \db is by the definition of $\xi^s = \Trun^k_s q$, equality \dc follows from \claimref{claim:constructive:property4}, and equality \dd is by definition of $q(i) = q_\pi(i) = \frac{W(1)}{1- \sum_{s\in S \wedge \pi_s > i} W(s)} \cdot \zeta(i)$.

For every majority arm $i\leq k$,
\begin{align}
\frac{q'(i)}{\zeta{(i)}}
&\overset\da=
W(1) \cdot \frac{\zeta(i)}{\zeta{(i)}} + \sum_{s\in S} W(s) \cdot \frac{\xi^s(i)}{\zeta{(i)}}
\nonumber \\
&\overset\db=
W(1)  + \sum_{s\in S} W(s) \cdot \frac{\sum_{j\leq k} \xi^s(j)}{\sum_{j\leq k} \zeta(j)} 
\label{eqn:q'}
\end{align}
where equality \da is by the definition of $q'$ and equality \db is because for every $i\leq k$ it satisfies $\frac{\xi^s(i)}{q(i)} = \frac{\sum_{j\leq k} \xi^s(j)}{\sum_{j\leq k}q(j)}$ (using definition of $\xi^s = \Trun^k_s q$) and for every $i\leq k$ it satisfies $\frac{\zeta(i)}{q(i)} = \frac{\sum_{j\leq k} \zeta(j)}{\sum_{j\leq k}q(j)}$ (using definition of $q = q_\pi$ \lineref{line:q-pi} of \algorithmref{alg:constructive}).

Now, the right hand side of \eqref{eqn:q'} is independent of $i$. Therefore, we can write $q'(i) = C_1 \cdot \zeta(i)$ for each $i \leq k$ with some constant $C_1> 0$. Our definition of $q = q_\pi$ (see \lineref{line:q-pi} of \algorithmref{alg:constructive}) ensures that we can also write $q(i) = C_2 \cdot \zeta(i)$ for each $i \leq k$ with some constant $C_2>0$. Therefore, since for every $i>k$ we have already shown $q'(i) = q(i)$, it must satisfy $C_1=C_2$ and therefore $q'(i) = q(i)$ for all $i\in [K]$.

After proving $q' = q$, we only need to argue about the running time. 

If \algorithmref{alg:constructive} is implemented naively, then the total running time is $O((K \cdot |S|)^2)$ because there are at most $K \cdot |S|$ iterations (see \claimref{claim:constructive:property1}) and in each iteration we can compute $q_\pi$ in time $O(K \cdot |S|)$. In fact it is rather easy to find implicit update rules to make each iteration of \algorithmref{alg:constructive} run in $O(1)$ time. We give some hints for this below.

Indeed, if in an iteration some $\pi_s$ is changed from $i$ to $i+1$ (recalling $i>k$), then we can update $q_\pi(i)$ in $O(1)$ time. For each $j>k$ where $j\neq i$, we have $q_\pi(j)$ is unchanged. The values of $q_\pi(j)$ for $j\leq k$ all need to be changed, but they are only changed altogether by the same multiplicative factor (which can again be calculated in $O(1)$ time). 

Finally, to search for $s\in S$ with $\pi_s \leq K$ and $q_\pi(\pi_s) > s$, we do not need to go through all $s\in S$. Instead, for each $i>k$, we maintain ``the smallest $s_i\in S$ so that $q_\pi(i)>s_i$.'' Then, whenever $\pi_{s_i}\leq i$, that means we can pick $s = s_i$ because $q_\pi(\pi_s) = q_\pi(\pi_{s_i}) \geq q_\pi(i) > s_i = s$. For such reason, one can maintain a first-in-first-out \texttt{list} to store all values of $i$ where $q_\pi(i)>s_i$. In each iteration of \algorithmref{alg:constructive} we simply pick the first element in \texttt{list} and perform the update. This changes exactly one $q_\pi(j)$ for $j>k$, and thus may additionally insert one element to \texttt{list}. Therefore, in each iteration we only need $O(1)$ time to find some $\pi_s$ to increase.
\end{proof}

\bibliographystyle{plainnat}
\bibliography{newbib}

\begin{thebibliography}{7}
\providecommand{\natexlab}[1]{#1}
\providecommand{\url}[1]{\texttt{#1}}
\expandafter\ifx\csname urlstyle\endcsname\relax
  \providecommand{\doi}[1]{doi: #1}\else
  \providecommand{\doi}{doi: \begingroup \urlstyle{rm}\Url}\fi

\bibitem[Agarwal et~al.(2017)Agarwal, Krishnamurthy, Langford, Luo, and
  Schapire]{AKLLS2017}
Alekh Agarwal, Akshay Krishnamurthy, John Langford, Haipeng Luo, and Robert~E.
  Schapire.
\newblock Open problem: First-order regret bounds for contextual bandits.
\newblock In Satyen Kale and Ohad Shamir, editors, \emph{Proceedings of the
  2017 Conference on Learning Theory}, volume~65 of \emph{Proceedings of
  Machine Learning Research}, pages 4--7, Amsterdam, Netherlands, 07--10 Jul
  2017. PMLR.
\newblock URL \url{http://proceedings.mlr.press/v65/agarwal17a.html}.

\bibitem[Allenberg et~al.(2006)Allenberg, Auer, Gy{\"{o}}rfi, and
  Ottucs{\'{a}}k]{AAGO06}
C.~Allenberg, P.~Auer, L.~Gy{\"{o}}rfi, and G.~Ottucs{\'{a}}k.
\newblock {H}annan consistency in on-line learning in case of unbounded losses
  under partial monitoring.
\newblock In \emph{Proceedings of the 17th International Conference on
  Algorithmic Learning Theory (ALT)}, 2006.

\bibitem[Auer et~al.(2002)Auer, Cesa-Bianchi, Freund, and Schapire]{ACFS03}
P.~Auer, N.~Cesa-Bianchi, Y.~Freund, and R.~Schapire.
\newblock The non-stochastic multi-armed bandit problem.
\newblock \emph{SIAM Journal on Computing}, 32\penalty0 (1):\penalty0 48--77,
  2002.

\bibitem[Bubeck and Cesa-Bianchi(2012)]{BC12}
S.~Bubeck and N.~Cesa-Bianchi.
\newblock Regret analysis of stochastic and nonstochastic multi-armed bandit
  problems.
\newblock \emph{Foundations and Trends in Machine Learning}, 5\penalty0
  (1):\penalty0 1--122, 2012.

\bibitem[Foster et~al.(2016)Foster, Li, Lykouris, Sridharan, and
  Tardos]{FLLST2016}
Dylan~J Foster, Zhiyuan Li, Thodoris Lykouris, Karthik Sridharan, and Eva
  Tardos.
\newblock Learning in games: Robustness of fast convergence.
\newblock In \emph{Advances in Neural Information Processing Systems (NIPS)},
  pages 4734--4742, 2016.

\bibitem[Lykouris et~al.(2017)Lykouris, Sridharan, and Tardos]{LST2017}
Thodoris Lykouris, Karthik Sridharan, and \'{E}va Tardos.
\newblock Small-loss bounds for online learning with partial information.
\newblock \emph{arXiv preprint arXiv:1711.03639}, 2017.

\bibitem[Neu(2015)]{Neu2015}
Gergely Neu.
\newblock First-order regret bounds for combinatorial semi-bandits.
\newblock In \emph{Proceedings of the 2015 Conference on Learning Theory
  (COLT)}, pages 1360--1375, 2015.

\end{thebibliography}
\end{document}